
\documentclass[nohyperref]{article}

\usepackage{microtype}
\usepackage{graphicx}
\usepackage{subfigure}
\usepackage{booktabs} 
\usepackage{multirow}
\usepackage[dvipsnames]{xcolor}

\usepackage{hyperref}



\usepackage[accepted]{icml2022}

\usepackage{amsmath}
\usepackage{amssymb}
\usepackage{mathtools}
\usepackage{amsthm}
\usepackage{bbm}

\usepackage[capitalize,noabbrev]{cleveref}

\theoremstyle{plain}
\newtheorem{theorem}{Theorem}[section]

\newtheorem{corollary}[theorem]{Corollary}
\theoremstyle{definition}

\theoremstyle{remark}

\usepackage[textsize=tiny]{todonotes}


\definecolor{mypink}{cmyk}{0, 0.7808, 0.4429, 0.1412}
\newcommand{\fyred}[1]{\textcolor{mypink}{#1}}

\newcommand\norm[1]{\left\lVert#1\right\rVert}

\usepackage{amsmath,amsfonts,bm}









\def\eqref#1{equation~\ref{#1}}









\def\1{\bm{1}}








\def\va{{\bm{a}}}

\def\ve{{\bm{e}}}

\def\vq{{\bm{q}}}
\def\vr{{\bm{r}}}

\def\vx{{\bm{x}}}
\def\vy{{\bm{y}}}



\def\mX{{\bm{X}}}

\DeclareMathAlphabet{\mathsfit}{\encodingdefault}{\sfdefault}{m}{sl}
\SetMathAlphabet{\mathsfit}{bold}{\encodingdefault}{\sfdefault}{bx}{n}











\newcommand{\softmax}{\mathrm{softmax}}



\icmltitlerunning{Scaling Structured Inference with Randomization}

\begin{document}

\twocolumn[
\icmltitle{Scaling Structured Inference with Randomization}

\icmlsetsymbol{equal}{*}

\begin{icmlauthorlist}
\icmlauthor{Yao Fu}{ed}
\icmlauthor{John P. Cunningham}{columbia,zuck}
\icmlauthor{Mirella Lapata}{ed}\\
\end{icmlauthorlist}

\icmlaffiliation{ed}{School of Informatics, University of Edinburgh}
\icmlaffiliation{columbia}{Statistics Department, Columbia University}
\icmlaffiliation{zuck}{Zuckerman Institute, Columbia University}

\icmlcorrespondingauthor{Yao Fu}{yao.fu@ed.ac.uk}
\icmlcorrespondingauthor{John P. Cunningham}{jpc2181@columbia.edu}
\icmlcorrespondingauthor{Mirella Lapata}{mlap@inf.ed.ac.uk}

\icmlkeywords{Machine Learning, ICML}

\vskip 0.3in
]


\printAffiliationsAndNotice{}

\begin{abstract}
  Deep discrete structured models have seen considerable progress
  recently, but traditional inference using dynamic programming (DP)
  typically works with a small number of states (less than hundreds),
  which severely limits model capacity.  At the same time, across
  machine learning, there is a recent trend of using randomized
  truncation techniques to accelerate computations involving large
  sums.  Here, we propose \emph{a family of randomized dynamic
  programming (RDP) algorithms} for scaling structured models to tens
  of thousands of latent states.  Our method is widely applicable to
  classical DP-based inference (partition function, marginal,
  reparameterization, entropy) and different graph structures (chains,
  trees, and more general hypergraphs).  It is also compatible with
  automatic differentiation: it can be integrated with neural networks
  seamlessly and learned with gradient-based optimizers.  Our core
  technique approximates the sum-product by restricting and
  reweighting DP on a small subset of nodes, which reduces computation
  by orders of magnitude.  We further achieve low bias and variance
  via Rao-Blackwellization and importance sampling.  Experiments over
  different graphs demonstrate the accuracy and efficiency of our
  approach.  RDP can also be used to learn a structured variational autoencoder with a
  scaled inference network which outperforms baselines in terms of
  test likelihood and successfully prevents collapse.
\end{abstract}


\section{Introduction}
\label{sec:intro}
\begin{table*}[t!]
  \caption{\label{tab:intro:algorithms} 
  Dynamic programming algorithms for different inference over different graphs. 
  Our randomization technique covers a spectrum of classical DP
  algorithms on different graphs.    Scaled algorithms shown in
  \fyred{red} are randomized in this work.  
} 
  \begin{small}
  \begin{center}
    \begin{tabular}{@{}lcccc@{}}   
    \toprule
      Graph & Model & Partition \& Marginal & Entropy & Sampling \& Reparameterization  \\
      \midrule 
      Chains & HMM, CRF, Semi-Markov  & \fyred{Forward-Backward} & \fyred{Backward Entropy} & \fyred{Gumbel Sampling}~\citep{Fu2020GumbelCRF}  \\ 
      Hypertrees &  PCFG, Dependency CRF  & \fyred{Inside-Outside} & Outside Entropy & Stochastic Softmax~\citep{paulus2020gradient}  \\ 
      General graph & General Exponential Family &   \fyred{Sum-Product} & \fyred{Bethe Entropy} & Stochastic Softmax~\citep{paulus2020gradient}  \\ 
    \bottomrule 
    \end{tabular}
  \end{center}
\end{small}
\end{table*}


Deep discrete structured models~\citep{martins-etal-2019-latent, rush2020torch}
have enjoyed great progress recently, improving performance and
interpretability in a range of tasks including sequence
tagging~\citep{ma-hovy-2016-end}, parsing~\citep{zhang2020efficient,
yang2021pcfgs}, and text
generation~\citep{wiseman-etal-2018-learning}.  However, their
capacity is limited by issues of
scaling~\citep{sun2019fast,chiu2020scaling,yang2021pcfgs,
chiu2021lowrank}.  Traditional dynamic programming based inference for
exponential families has limited scalability with large combinatorial
spaces.  When integrating exponential families with neural networks
(e.g., a VAE with a CRF inference network, detailed later), the small
latent combinatorial space severely restricts model capacity.
Existing work has already observed improved performance by scaling
certain types of structures~\citep{yang2021pcfgs, li2020optimus,
chiu2020scaling}, and researchers are eager to know if there are
general techniques for scaling classical structured models.

Challenges in scaling structured models primarily come from memory
complexity.  For example, consider linear-chain
CRFs~\citep{sutton2006introduction}, the classical sequence model that
uses the Forward algorithm, a dynamic programming algorithm, for
computing the partition function exactly.
This algorithm requires $O(TN^2)$ computation where $N$ is the number of latent states and $T$ is the length of the sequence. It is precisely the $N^2$ term that is problematic in terms of memory and computation. 
This limitation is more severe under automatic differentiation (AD) frameworks as all intermediate DP computations are stored for gradient construction. 
Generally, DP-based inference algorithms are not optimized for modern computational hardware like GPUs and typically work under small-data regimes, with $N$ in the range [10, 100]~\citep{ma-hovy-2016-end,wiseman-etal-2018-learning}. 
With larger $N$, inference becomes intractable since the computation graph does not easily fit into GPU memory~\citep{sun2019fast}.

Aligning with a recent trend of exploiting randomization techniques for machine learning problems~\citep{oktay2020randomized, pmlr-v139-potapczynski21a, pmlr-v97-beatson19a}, 
this work proposes a randomization framework for scaling structured models, which encompasses a family of randomized dynamic programming algorithms with a wide coverage of different structures and inference (Table~\ref{tab:intro:algorithms}).
Within our randomization framework, 
instead of summing over all possible combinations of latent states, we only sum over paths with the most probable states and sample a subset of less likely paths to correct the bias according to a reasonable proposal.
Since we only calculate the chosen paths, memory consumption can be reduced to a reasonably small budget.
We thus recast the computation
challenge into a tradeoff between memory budget, proposal accuracy,
and estimation error.  In practice, we show RDP scales existing models
by \textit{two orders of magnitude} with memory complexity \textit{as
small as one percent}.

In addition to the significantly increased scale, we highlight the following advantages of RDP:  
(1) applicability to different structures (chains, trees, and hypergraphs) and inference operations (partition function, marginal, reparameterization, and entropy);  
(2) compatibility with automatic differentiation and existing efficient libraries (like Torch-Struct in~\citealp{rush2020torch}); and
(3) statistically principled controllability of bias and variance. 
As a concrete application, we show that RDP can be used for learning a structured VAE with a scaled inference network.  
In experiments, we first demonstrate that RDP algorithms estimate
partition function and entropy for chains and hypertrees with lower mean square
errors than baselines.  Then, we show their joint effectiveness for
learning the scaled VAE.  RDP outperforms baselines in terms of test
likelihood and successfully prevents posterior collapse.
Our implementation is at \url{https://github.com/FranxYao/RDP}.

\section{Background and Preliminaries}
\label{sec:background}

\textbf{Problem Statement}\quad We start with a widely-used
structured VAE framework.  Let $\vy$ be an observed variable.
Let~$\mX$ be any latent structure (sequences of latent tags, parse
trees, or general latent graphs, see Table~\ref{tab:intro:algorithms})
that generates $\vy$.  Let~$p_\psi$ denote the generative model,
and~$q_\theta$ the inference model.  We optimize:
\begin{align}
  \mathcal{L} = \mathbb{E}_{q_\theta(\mX | \vy)}[\log p_\psi(\mX, \vy) - \log q_\theta(\mX | \vy)] \label{eq:elbo}
\end{align} 
where the inference model $q_\theta$ 
takes the form of a discrete undirected exponential family (e.g.,
linear-chain CRFs in~\citealp{Fu2020GumbelCRF}).  Successful
applications of this framework include sequence
tagging~\citep{mensch2018differentiable}, text
generation~\citep{li-rush-2020-posterior}, constituency
parsing~\citep{kim2019unsupervised}, dependency
parsing~\citep{corro2018differentiable}, latent tree
induction~\citep{paulus2020gradient}, and so on.  Our goal is to learn
this model with a scaled latent space (e.g.,~a CRF encoder with tens
of thousands of latent states).

\textbf{Sum-Product Recap}\quad 
Learning models in Eq.~\ref{eq:elbo} usually requires 
classical sum-product inference (and its variants) of $q_\theta$, which we now review. 
Suppose $q_\theta$ is in standard overparameterization of a discrete exponential family~\citep{wainwright2008graphical}:
\begin{align}
  q_\theta(\mX) \propto \exp \left\{\sum_{s \in V}\phi_s(x_s) + \sum_{(s, t) \in E} \phi_{st}(x_s, x_t)\right\}
\end{align}
where $\mX = [X_1, ..., X_M]$ is a random vector of nodes in a
graphical model with each node taking discrete values $X_i \in \{1, 2,
..., N\}$, $N$ is the number of states, $\phi_{st}$ is the edge
potential, and $\phi_s$ is the node potential.  Here, we use a general
notation of nodes $V$ and edges $E$. We discuss specific structures
later.  Suppose we want to compute its marginals and log partition function.
The solution is the sum-product algorithm that recursively updates the
message at each edge:
\begin{align}
  \mu_{ts}(x_s) \propto \sum_{x_t' = 1}^N \Big\{ \phi_{st}(x_s, x_t')\phi_t(x_t') \prod_{u \in V_t^s} \mu_{ut}(x_t') \Big\} \label{eq:message}
\end{align}
where $\mu_{ts}(x_s)$ denotes the message from node $t$ to node $s$
evaluated at $X_s = x_s$, $V_t^s$ denotes the set of neighbor nodes of
$t$ except $s$.  Upon convergence, it gives us the Bethe approximation
(if the graph is loopy) of the marginal, partition function, and entropy (as
functions of the edge marginals~$\mu_{ts}$).  These quantities are
then used in gradient based updates of the potentials~$\phi$ (detailed
in Sec.~\ref{sec:vae_example}).  When the underlying graph is a tree,
this approach becomes exact and convergence is linear in the
number of edges.

\textbf{Challenges in Scaling}\quad
The challenge is how to scale the sum-product computation for $q_\theta$. 
Specifically, gradient-based learning requires three types of
inference: (a)~partition estimation (for maximizing likelihood);
(b)~re-parameterized sampling (for gradient estimation); and
(c)~entropy estimation (for regularization).
Existing sum-product variants (Table~\ref{tab:intro:algorithms})
provide exact solutions, but only for a small latent space (e.g.,~a
linear-chain CRF with state number smaller than 100).  Since the
complexity of DP-based inference is usually at least quadratic to the
size of the latent states, it would induce memory overflow if we
wanted to scale it to tens of thousands.

\textbf{Restrictions from Automatic Differentiation}\quad
At first sight, one may think the memory requirement for some
algorithms is not so large.  For example, the memory complexity of a
batched forward sum-product (on chains) is $O(BTN^2)$.  Consider batch
size~$B=10$, sequence length $T=100$, number of states $N=1000$, then
the complexity is $O(10^{9})$, which seems acceptable, especially
given multiple implementation tricks (e.g.,~dropping the intermediate
variables).
However, this is not the case \textit{under automatic
  differentiation}, primarily because  AD requires \textit{storing
  all intermediate computations}\footnote{More recently, there are works improving the memory-efficiency of AD like~\citet{10.5555/3433701.3433727} by off-loading intermediate variables to CPUs. Yet their application on sum-product inference requires modification of the AD library, which raises significant engineering challenges.}
  for building the adjacent gradient
graph.  This not only invalidates tricks like dropping intermediate
variables (because they should be stored)
but also multiplies memory complexity when building the adjacent
gradient graph~\citep{eisner-2016-inside}.  This problem is more
severe if we want to compute higher-order gradients, or if the
underlying DP has higher-order complexity (e.g., the Inside algorithm
of \textit{cubic complexity} $O(T^2N^3)$).
Since gradient-based optimization requires AD-compatibility, 
scaling techniques are under similar restrictions, namely storing all
computations to construct the adjacent gradient graph.

\textbf{Previous Efforts}\quad Certainly there exist several excellent scaling
techniques for structured models, though most come with some
intrinsic limitations that we hope to alleviate.
In addition to AD-compatibility restrictions, many existing techniques
either require additional assumptions (e.g., sparsity
in~\citealp{lavergne2010practical, Sokolovska2010EfficientLO,
  correia2020efficient}, pre-clustering in~\citealp{chiu2020scaling},
or low-rank in~\citealp{chiu2021lowrank}), rely on handcrafted
heuristics for bias correction~\citep{jeong2009efficient}, or cannot
be easily adapted to modern GPUs with tensorization and
parallelization~\citep{klein2003parsing}.  As a result, existing
methods apply to a limited range of models: \citet{chiu2020scaling}
only consider chains and \citet{yang2021pcfgs} only consider probabilistic context-free grammars (PCFGs).

One notably promising direction comes from~\citet{sun2019fast}, where
they consider topK computation and drop all low probability states. 
\citet{NEURIPS2018_6211080f} also considered topK inference in a smoothed setting (where they have more theoretical analytics but are less about memory efficiency) and their topK inference also points to efficiency improvements.
While intuitively sensible
(indeed we build on this idea here), their deterministic truncation
induces severe bias (later shown in our experiments). As we will
discuss, this bias can be effectively mitigated by randomization.

\textbf{Randomization of Sums in Machine Learning}\quad Randomization
is a long-standing technique in machine
learning~\citep{10.1561/2200000035, doi:10.1137/090771806} and has
been applied to dynamic programs before the advent of deep
learning~\citep{NIPS2009_e515df0d, blunsom-cohn-2010-inducing}.
Notably, works like~\citet{koller1999general, pmlr-v5-ihler09a} also consider sampling techniques for sum-product inference.
However, since these methods are proposed before deep learning, their differentiability remain untested. 
More recently, randomization has been used in the context of deep
learning, including density estimation~\citep{NEURIPS2020_33d3b157},
Gaussian processes~\citep{pmlr-v139-potapczynski21a}, automatic
differentiation~\citep{oktay2020randomized}, gradient
estimation~\citep{correia2020efficient}, and
optimization~\citep{pmlr-v97-beatson19a}.  The foundation of our
randomized DP also lies in speeding up summation by randomization.  To
see the simplest case, consider the sum of a sorted list $\va$ of
positive numbers: $S = \sum_{i=1}^N a_i$.  This requires $N-1$
additions, which could be expensive when $N$ is large.  Suppose one
would like to reduce the number of summands to $K$,~\citet{liu2019rao}
discusses the following sum-and-sample estimator for gradient
estimation:
\begin{align}
  \hat{S} &= \sum_{i = 1}^{K_1} a_i + \frac{1}{K_2}\sum_{j=1}^{K_2} \frac{a_{\delta_j}}{q_{\delta_j}} \label{eq:tailsum}
\end{align}
where $K_1 + K_2 = K$ and $\delta_j \sim \vq = [q_{K_1 + 1}, ..., q_{N}]$, 
$\vq$ is a proposal distribution upon the tail summands $[a_{K_1
  + 1}, ..., a_N]$. 
This estimator is visualized in Fig.~\ref{fig:method}A.
One can show that it is unbiased:
$\mathbb{E}_\vq[\hat{S}] = S$, irrespective of how we choose the
proposal $\vq$.  The oracle proposal $\vq^\star$is the normalized tail
summands: $q_i^\star = a_i / \sum_{j = K_1 + 1}^N a_j$, under which
the estimate becomes exact: $\hat{S} \equiv S$.  
Note that the bias is corrected 
by dividing the proposal probability.  


\begin{figure*}[ht]
  \centering
  \includegraphics[width=\linewidth]{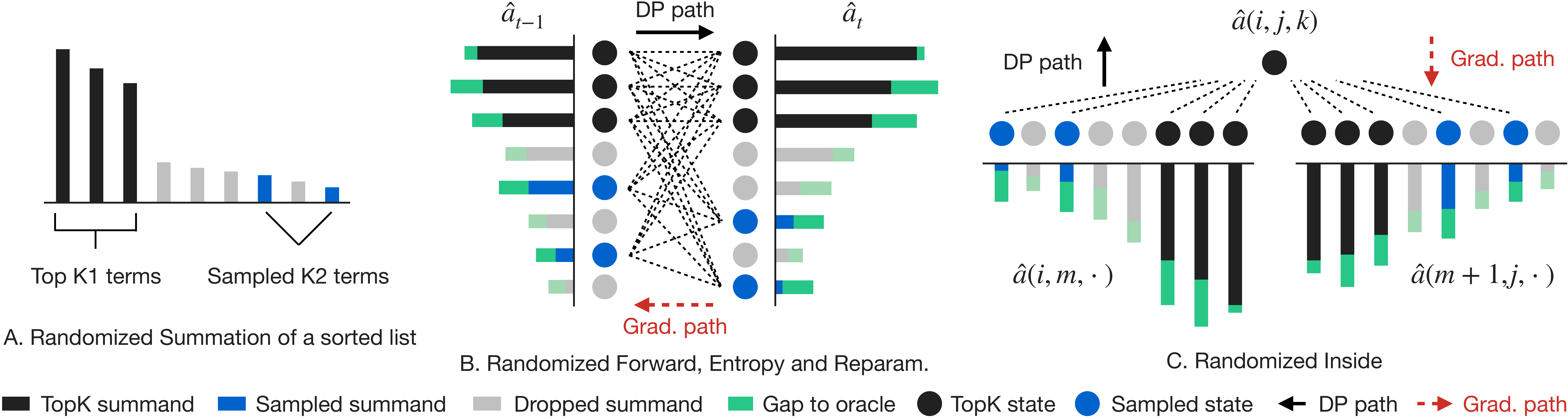}
  \caption{Scaling inference by randomization. 
  (A): randomized summation (Eq.\ref{eq:tailsum}). 
  (B): Randomized Forward (Alg.~\ref{alg:forward}) recursively applies
  randomized summation at each DP step.  Memory reduction is achieved
  by restricting computation on edges linking the sampled nodes.
  Second-order randomized DPs (Entropy and Reparameterization,
  Alg.~\ref{alg:entropy} and~\ref{alg:gumbel_ffbs}) reuse this graph
  and share the same path as the gradients.  (C): Randomized
  Inside (Alg.~\ref{alg:inside}) applies randomized summation twice at
  each DP step.  All algorithms in our randomized DP family are
  compatible to automatic differentiation, the direction of the
  gradient is shown by red dashed arrows.  }
  \label{fig:method}
\end{figure*}

\section{Scaling Inference with Randomization}
\label{sec:method}

In this section, we introduce our randomized dynamic programming
algorithms.  We first introduce a generic randomized sum-product framework
for approximating the partition function of any graph structure.  
Then we instantiate this framework on
two classical structures, chains and hypertrees, respectively.
When running the randomized sum-product in a left-to-right (or bottom-up) order, 
we get a randomized version of the classicial Forward (or Inside) algorithm for 
estimating the partition function for chains like HMMs and Linear-chain CRFs (or hypertrees like PCFGs and Dependency CRFs in~\citealp{kim2019unsupervised}). 
We call these two algorithms \textit{first-order RDPs} because they run
the computation graph in one pass.  Next, we generalize RDP to two
more inference operations (beyond partition estimation): entropy
estimation (for the purposes of regularized training) and
reparameterization (for gradient-based optimization).  We collectively
use the term \textit{second-order RDPs} to refer to entropy and
reparameterization algorithms because they call first-order RDPs as a
subroutine and run a second pass of computation upon the first-order
graphs. 
We will compose these algorithms into a structured VAE in
Sec.~\ref{sec:vae_example}.

\subsection{Framework: Randomized Sum-Product}
\label{ssec:method:framework}

Our key technique for scaling the sum-product is a 
\textit{randomized message estimator}. 
Specifically, for each node $X_t$, given a pre-constructed proposal $\vq_t = [q_t(1), ..., q_t(N)]$ (we discuss how to construct a correlated proposal later because it depends on the specific neural parameterization), we retrieve the top $K_1$ index $\{\sigma_{t, i}\}_{i=1}^{K_1}$ from $\vq_t$, and get a $K_2$-sized sample $\{\delta_{t, i}\}_{i=1}^{K_2}$ from the rest of $\vq_t$:
\begin{align}
  &[\sigma_{t, 1}, ..., \sigma_{t, K_1}, ..., \sigma_{t, N}] = \arg \text{sort} \{q_t(i)\}_{i=1}^N \label{eq:topk_index}\\ 
  &\delta_{t, j} \overset{\text{i.i.d.}}{\sim} \text{Categorical}\{q_t(\sigma_{t, K_1 + 1}), ..., q_t(\sigma_{t, N})\} \label{eq:q_proposal}\\ 
  &  \Omega_t^{K_1} = \{\sigma_{t, i}\}_{i=1}^{K_1} \quad \quad \Omega_t^{K_2} = \{\delta_{t, j}\}_{j=1}^{K_2}  \label{eq:topk_and_sample_index}
\end{align}
Then, we substitute the full index $\{1, ..., N\}$ with the the top
$\Omega_t^{K_1}$ and the sampled index $\Omega_t^{K_2}$:
\begin{align}
  & \hat{\mu}_{ts}(x_s) \propto  \sum_{\sigma_t \in \Omega_t^{K_1}} \Big\{ \phi_{st}(x_s, \sigma_t)\phi_t(\sigma_t) \prod_{u \in V_t^s} \hat{\mu}_{ut}(\sigma_t) \Big\} + \notag \\
  & \sum_{\delta_t \in \Omega_t^{K_2}} \Big\{ \frac{1}{K_2 q_t(\delta_t)} \phi_{st}(x_s, \delta_t)\phi_t(\delta_t) \prod_{u \in V_t^s} \hat{\mu}_{ut}(\delta_t) \Big\} \label{eq:sampled_message}
\end{align}
where the oracle proposal is proportional to the actual summands
(Eq.~\ref{eq:message}).  
Here indices are pre-computed outside DP.
This means the computation of
Eq.~\ref{eq:topk_index}--\ref{eq:topk_and_sample_index} can be moved
outside the AD engine (e.g., in Pytorch, under the \texttt{no\_grad}
statement) to further save GPU memory usage.

We now show that estimator Eq.~\ref{eq:sampled_message} can be interpreted as a combination of Rao-Blackwellization (RB) and importance sampling (IS). 
Firstly, RB says that a function $J(X, Y)$ depending on two random variables $X, Y$ has larger variance than its conditional expectation $\hat{J}(X) = \mathbb{E}[J(X, Y) | X]$~\citep{pmlr-v33-ranganath14}: 
\begin{align}
  \mathbb{V}[\hat{J}(X)] &= \mathbb{V}[J(X, Y)] - \mathbb{E}[(J(X, Y) - \hat{J}(X))^2] \\
  &\le \mathbb{V}[J(X, Y)] \label{eq:rb}
\end{align}
This is because $Y$ has been integrated out and the source of randomness is 
reduced to $X$. 
Now let  $\delta_1, \delta_2$ be uniform random indices taking values from $\Omega_t^{K_1}$ and $\Omega_t^{K_2}$, respectively.
Consider a simple unbiased estimate $S$ for the message $\mu_{st}$: 
\begin{align}
  m_{st}(i) &= \phi_{st}(x_s, i)\phi_t(i) \prod_{u \in V_t^s} \mu_{ut}(i) \label{eq:message_def}\\
  S(\delta_1, \delta_2) &= K_1 m_{st}(\delta_1) + (N - K_1) m_{st}(\delta_2)
\end{align}
The conditional expectation $\hat{S}(\delta_2) = \mathbb{E}[S(\delta_1, \delta_2) | \delta_2]$ is:
\begin{align}
  \hat{S}(\delta_2) = \sum_{\sigma_t \in \Omega_t^{K_1}} m_{st}(\sigma_t) + (N - K_1) m_{st}(\delta_2) \label{eq:conditional_message}
\end{align}
Plugging $S(\delta_1, \delta_2), \hat{S}(\delta_2)$ in Eq.~\ref{eq:rb}, we get 
variance reduction:
$\mathbb{V}[\hat{S}(\delta_2)] \le \mathbb{V}[S(\delta_1, \delta_2)]$.
Note that the variance will becomes smaller as $K_1$ becomes larger. 
Now change $\delta_2$ in Eq.~\ref{eq:conditional_message} to from the proposal in Eq.~\ref{eq:q_proposal} then take expectation: 
\begin{align}
  &\mathbb{E}_{\delta_2 \sim \text{Uniform}(\Omega_t^{K_2})}[(N - K_1)m_{st}(\delta_2)]  \\
  &= \mathbb{E}_{\delta_2 \sim \vq_t}[\frac{1}{q_t(\delta_2)(N-K_1)}(N - K_1)m_{st}(\delta_2)] \\
  &= \mathbb{E}_{\delta_2 \sim \vq_t}[\frac{1}{q_t(\delta_2)} m_{st}(\delta_2)] \label{eq:is_message}
\end{align}
This is an instance of importance sampling where the importance weight is 
$\frac{1}{q_t(\delta_2)(N-K_1)}$.
Variance can be reduced if $\vq_t$ is correlated with the summands (Eq.~\ref{eq:message_def}).
Finally, combining Eq.~\ref{eq:is_message} and~\ref{eq:conditional_message} and 
increasing the number of sampled index $\delta_2$ to $K_2$ will recover our full message estimator in Eq.~\ref{eq:sampled_message}.


\begin{algorithm}[t!]
  \small
  \caption{Randomized Forward}
  \label{alg:forward}
\begin{algorithmic}
  \STATE {\bfseries Input:} potentials $\phi(x_{t-1}, x_t, y_t)$,  top $K_1$ index set $\Omega_t^{K_1}$, sampled $K_2$ index set $\Omega_t^{K_2}$
  \STATE \textbf{Initialize} $\alpha_1(i) = \phi(x_0 = \emptyset, x_1 = i, y_t)$
  \STATE \textbf{For} $t=2$ {\bfseries to} $T$, compute recursion:
  \begin{align}
    \hat{\alpha}_t(i) &= \sum_{\sigma \in \Omega_{t-1}^{K_1}} \hat{\alpha}_{t-1}(\sigma)\phi(\sigma, i, y_t) + \notag\\ 
    & \sum_{\delta \in \Omega_{t-1}^{K_2}} \frac{1}{K_2 q_t(\delta)}\hat{\alpha}_{t-1}(\delta)\phi(\delta, i, y_t) \label{eq:forward_recursion}
  \end{align}
  \STATE \textbf{Return} $\hat{Z} = \sum_{\sigma \in \Omega_T^{K_1}} \hat{\alpha}_T(\sigma) + \sum_{\delta \in \Omega_T^{K_2}} \frac{1}{K_2 q_T(\delta)}\hat{\alpha}_T(\delta)$ and $\{\hat{\alpha}_t\}_{t=1}^T$ 
\end{algorithmic}
\end{algorithm}
\subsection{First-Order Randomized DP}
\label{ssec:method:first_order_rdp}

Now we instantiate this general randomized message passing principle
for chains (on which sum-product becomes the Forward algorithm) and tree-structured hypergraphs (on which sum-product becomes the Inside algorithm).
When the number of states~$N$ is large,
exact sum-product requires large GPU memory when implemented with AD
libraries like Pytorch.
This is where randomization comes to rescue. 

\subsubsection{Randomized Forward}
Algorithm~\ref{alg:forward} shows our randomized Forward algorithm for approximating the partition function of chain-structured graphs. 
The core recursion in Eq.~\ref{eq:forward_recursion} estimates the
alpha variable~$\hat{\alpha}_t(i)$ as the sum of all possible
sequences up to step $t$ at state $i$.  
It corresponds to the
Eq.~\ref{eq:sampled_message} applied to chains
(Fig.~\ref{fig:method}B). 
Note how the term $K_2q_t(\delta)$ is divided in Eq.~\ref{eq:forward_recursion} to correct the estimation bias. 
Also note that all computation in Alg.~\ref{alg:forward} are differentiable w.r.t. the factors $\phi$.
We can recover the classical Forward
algorithm by changing the chozen index set $\Omega_t^K$ to the full
index $[1, .., N]$.  In Appendix~\ref{sec:app:theory:bias}, we prove
the unbiasedness by induction.



As discussed in the above section,  
we reduce the variance of the randomized Forward by
(1)~Rao-Blackwellization  (increasing $K_1$ to reduce randomness); and
(2)~importance sampling (to construct a proposal correlated to the actual summands). 
The variance comes from the gap between the proposal and the oracle
(only accessible with a full DP), as  shown in the green bars in Fig~\ref{fig:method}B.
Variance is also closely related to how \textit{long-tailed} the underlying distribution is: the longer the tail, the more effective Rao-Blackwellization will be. 
More detailed variance analysis
is presented in Appendix~\ref{sec:app:theory:variance}.  In practice,
we implement the algorithm in log space (for numerical stability)\footnote{Common machine learning practice works with log probabilities, thus log partition functions. Yet one can also implement Alg.\ref{alg:forward} in the original space for unbiasedness.}.
which has two implications: (a). the estimate becomes a lower bound due to
Jensen's inequality;
(b). the variance is \textit{exponentially reduced} by the $\log(\cdot)$ function (in a rather trivial way).  

Essentially, the Sampled Forward restricts the DP computation from the
full graph to a subgraph with chosen nodes ($\Omega_t^{K_1}$ and
$\Omega_t^{K_2}$ for all $t$), quadratically reducing  memory
complexity from $O(TN^2)$ to $O(TK^2)$. 
Since all computations in Alg.~\ref{alg:forward} are differentiable,
one could directly compute gradients of the estimated partition function with
any AD library, thus enabling gradient-based optimization.  
Back-propagation shares the same DP graph as the Randomized Forward,
but reverses its direction (Fig.~\ref{fig:method}B).  

\begin{algorithm}[t!]
  \caption{Randomized Inside}
  \small
  \label{alg:inside}
  \begin{algorithmic}
    \STATE {\bfseries Input:} potentials $\phi(i, j, k)$, top $K_1$ index set $\Omega_{i, j}^{K_1}$, sampled $K_2$ index set $\Omega_{i, j}^{K_2}$
    \STATE \textbf{Initialize} $\alpha(i, i, k) = \phi(i, i, k)$.
    \STATE \textbf{For} $l=1$ {\bfseries to} $T - 1$, let $j = i + l$, compute recursion:
    \begin{align}
      \hat{\alpha}&(i, j, k) = \phi(i, j, k) \sum_{m = i}^{j-1}\big\{ \notag\\
       &\sum_{\sigma_1 \in \Omega_{i, m}^{K_1}, \sigma_2 \in \Omega_{m + 1, j}^{K_1} } \hat{\alpha}(i, m, \sigma_1) \cdot \hat{\alpha}(m + 1, j, \sigma_2) + \notag\\
       &\sum_{\delta_1 \in \Omega_{i, m}^{K_2}, \delta_2 \in \Omega_{m + 1, j}^{K_2} } \frac{1}{K_2 q_{i,m}(\delta_1)} \hat{\alpha}(i, m, \delta_1) \cdot \notag\\
       & \quad \quad \quad \quad \quad \frac{1}{K_2 q_{m+1, j}(\delta_2)} \hat{\alpha}(m + 1, j, \delta_2) \big\} \label{eq:inside_recursion}
    \end{align}
    \STATE \textbf{Return} $\hat{Z} = \sum_{\sigma \in \Omega_{1, T}^{K_1}} \hat{\alpha}(1, T, \sigma) + $ \\ $\quad\quad\quad\quad\quad\;\sum_{\delta \in \Omega_{1, T}^{K_2}} \frac{1}{K_2 q_T(\delta)}\hat{\alpha}(1, T, \delta)$  \\
    $\quad\quad\quad \text{and} \;\;\{\hat{\alpha}_{i, j}\}, i, j \in\{1, ..., T\}$
  \end{algorithmic}
\end{algorithm}
\subsubsection{Randomized Inside}
We now turn to our Randomized Inside algorithm for approximating the partition function of tree-structured hypergraphs (Alg.~\ref{alg:inside}).
It recursively estimates the inside variables $\hat{\alpha}(i, j, k)$
which sum over all possible tree branchings (index $m$ in
Eq.~\ref{eq:inside_recursion}) and chosen state combinations (index
$\sigma_1, \sigma_2, \delta_1, \delta_2$, Fig~\ref{fig:method}C).
Index $i, j$ denotes a subtree spanning from location $i$ to $j$ in a given
sequence, and $k$ denotes the state.  
Different from the Forward case,
this algorithm computes the product of
two randomized sums that represent two subtrees, i.e.,~$(i, m)$
and $(m + 1, j)$.  The proposal is constructed for each subtree. i.e., $q_{i, m}$ and $q_{m+1,j}$, and are both divided in Eq.~\ref{eq:inside_recursion} for correcting the estimation bias.  


The Randomized Inside restricts computation to the sampled states of
subtrees, reducing the complexity from $O(T^2N^3)$ to $O(T^2K^3)$.
The output partition function is again differentiable and the gradients reverse
the computation graph.  Similar to the Forward case, unbiasedness can
be proved by induction.
The variance 
can be decreased by  increasing $K_1$ to reduce
randomness and constructing a correlated proposal. 
In Fig.~\ref{fig:method}C, green bars represent gaps to the oracle proposal, which is a major source of estimation error.



\begin{algorithm}[tb]
  \caption{Randomized Entropy DP}
  \label{alg:entropy}
  \small
  \begin{algorithmic}
    \STATE {\bfseries Input:} potentials $\phi(x_{t-1}, x_t, y_t)$,  top $K_1$ index set $\Omega_t^{K_1}$, sampled $K_2$ index set $\Omega_t^{K_2}$
    \STATE \textbf{Initialize} $H_1(i) = 0$; call Randomized Forward to get $\hat{Z}, \hat{\alpha}$
    \STATE \textbf{For} $t=1$ {\bfseries to} $T-1$, compute recursion:
    \begin{align}
      &\hat{p}_t(i, j) = \phi(i, j, x_t)\hat{\alpha}_{t}(i) / \hat{\alpha}_{t + 1}(j) \\ 
      &\hat{H}_{t + 1}(j) = \sum_{\sigma \in \Omega_{t}^{K_1}} \hat{p}_t(\sigma, j)[\hat{H}_{t}(\sigma) - \log \hat{p}_t(\sigma, j)] + \notag\\ 
      &\;\;  \sum_{\delta \in \Omega_{t}^{K_2}} \frac{1}{K_2 q_{t}(\delta)}\hat{p}_t(\delta, j)[\hat{H}_{t}(\delta) - \log \hat{p}_t(\delta, j)] \label{eq:ent_recursion}
    \end{align}
    \STATE \textbf{Return} $\hat{H}$ where: 
    \begin{align}
      &\hat{p}_T(i) = \hat{\alpha}_T(i) / \hat{Z} \\ 
      &\hat{H} = \sum_{\sigma \in \Omega_{T}^{K_1}} \hat{p}_T(\sigma)[\hat{H}_{T}(\sigma) - \log \hat{p}_T(\sigma)] + \notag\\ 
      & \quad \sum_{\delta \in \Omega_{T}^{K_2}} \frac{1}{K_2 q_{T}(\delta)}\hat{p}_T(\delta)[\hat{H}_{T}(\delta) - \log \hat{p}_T(\delta)]
    \end{align}
  \end{algorithmic}
\end{algorithm}
\begin{algorithm}[tb]
  \caption{Randomized Gumbel Backward Sampling}
  \label{alg:gumbel_ffbs}
  \small
  \begin{algorithmic}
    \STATE {\bfseries Input:} potentials $\phi(x_{t-1}, x_t, y_t)$,  top $K_1$ index set $\Omega_t^{K_1}$, sampled $K_2$ index set $\Omega_t^{K_2}$, gumble noise $g_t(i) \sim \text{Gumbel}(0, 1)$
    \STATE \textbf{Initialize:} call Randomized Forward to get $\hat{Z}, \hat{\alpha}$, then: 
    \begin{align}
      &\hat{p}_T(i) = \hat{\alpha}_{T}(i)/ \hat{Z},\;\; i \in \Omega_T^{K_1} \cup \Omega_T^{K_2}\\ 
      &\tilde{\vx}_T = \text{softmax}_i (\log \hat{p}_T(i) + g_T(i))  \\ 
      &\hat{x}_T = \text{argmax}_i \tilde{\vx}_T(i)
    \end{align}
    \STATE \textbf{For} $t=T-1$ {\bfseries to} $1$, compute recursion:
    \begin{align}
      &\hat{p}_{t}(i, j) = \phi(i, j, y_t)\hat{\alpha}_{t}(i) / \hat{\alpha}_{t + 1}(j) \notag\\
      &\quad\quad\quad\quad i \in \Omega_t^{K_1} \cup \Omega_t^{K_2}, j \in \Omega_{t+1}^{K_1} \cup \Omega_{t+1}^{K_2}\\ 
      &\tilde{\vx}_t = \text{softmax}_i (\log \hat{p}_{t}(i, \hat{x}_{t + 1}) + g_t(i))  \\ 
      &\hat{x}_t = \text{argmax}_i \tilde{\vx}_t(i)
    \end{align}
    \STATE \textbf{Return} relaxed sample $\{\tilde{\vx}_t\}_{t=1}^T$, hard sample $\{\hat{x}_t\}_{t=1}^T$
  \end{algorithmic}
\end{algorithm}
\subsection{Second-Order Randomized DP}
\label{ssec:method:second_order_rdp}

We now generalize RDP from partition function estimation to two more inference operations: entropy and reparameterized sampling. 
When composing graphical models with neural networks, entropy is usually required for regularization, and reparameterized sampling is required for Monte-Carlo gradient estimation~\citep{mohamed2020monte}.
Again, we call our methods \textit{second-order} because they
reuse the computational graph and intermediate outputs of first-order RDPs.
We focus on chain structures (HMMs and linear-chain CRFs) for simplicity. 
 
 
\subsubsection{Randomized Entropy DP}
Algorithm~\ref{alg:entropy} shows the randomized entropy DP\footnote{See \citet{mann-mccallum-2007-efficient, li-eisner-2009-first} for more details on deriving entropy DP.}. 
It reuses the chosen index $\Omega_t^K$ (thus the computation graph) and the intermediate variables $\hat{\alpha}_t$ of the randomized Forward, and recursively estimates the conditional entropy $\hat{H}_t(j)$ which represents the entropy of the chain ending at step $t$, state $j$. 
Unbiasedness can be similarly proved by induction. 
Note that here the estimate is biased because of the $\log(\cdot)$ in Eq.~\ref{eq:ent_recursion} (yet in the experiments we show its bias is significantly less than the baselines).
Also note that the proposal probability $q_t(\delta)$ should be divided for bias correction (Eq.~\ref{eq:ent_recursion}). 
Again, all computation is differentiable, which means that the  
output entropy can be directly differentiated by AD engines. 


\subsubsection{Randomized Gumbel Backward Sampling}
When training VAEs with a structured inference network, one usually requires differentiable samples from the inference network.
Randomized Gumbel backward sampling (Alg.~\ref{alg:gumbel_ffbs}) provides differentiable relaxed samples from HMMs and Linear-chain CRFs. 
Our algorithm is based on the recently proposed Gumbel Forward-Filtering
Backward-Sampling (FFBS) algorithm for reparameterized gradient
estimation of CRFs (see more details in ~\citealp{Fu2020GumbelCRF}), and scales it to CRFs with
tens of thousands of states.  It reuses DP paths of the randomized
Forward and recursively computes hard sample $\hat{x}_t$ and soft
sample $\tilde{\vx}_t$ (a relaxed one-hot vector) based on the chosen
index $\Omega_t^K$.  When differentiating these soft samples for
training structured VAEs, they induce biased but low-variance
reparameterized gradients.




\begin{table*}[t]
  \caption{
  Mean square error comparison between RDP algorithms v.s. TopK approximation. D = Dense, I = intermediate, L = Long-tailed distributions. 
  $\log Z$ denotes log partition function.
  Our method outperforms the baseline on all unit cases with significantly less memory. 
  }
  \label{tab:exp:overall}
  \small
  \centering
  \begin{tabular}{@{}lccccccccc@{}}
      \toprule
       & \multicolumn{3}{c}{Linear-chain $\log Z$}  & \multicolumn{3}{c}{Hypertree $\log Z$}  & \multicolumn{3}{c}{Linear-chain Entropy} \\
      \cmidrule(lr){2-4}\cmidrule(lr){5-7} \cmidrule(lr){8-10}
      $N=2000$ & D & I & L & D & I & L & D & I & L  \\ \midrule
      \textsc{TopK 20\%$N$}  & 3.874 & 1.015 & 0.162 & 36.127 & 27.435 & 21.78 & 443.7 & 84.35 & 8.011 \\ 
      \textsc{TopK 50\%$N$}  & 0.990 & 0.251 & 0.031 & 2.842 & 2.404 & 2.047 & 131.8 & 22.100 & 1.816 \\ 
      \textsc{RDP \textbf{1\%}$N$} (ours)  & 0.146 & 0.066 & 0.076 & 26.331 & 37.669 & 48.863 & 5.925 & 1.989 & 0.691 \\ 
      \textsc{RDP 10\%$N$} (ours)  & 0.067 & 0.033 & 0.055 & 1.193 & 1.530 & 1.384 & 2.116 & 1.298 & 0.316 \\ 
      \textsc{RDP 20\%$N$} (ours)  & \bf 0.046 & \bf 0.020 & \bf 0.026 & \bf 0.445 & \bf 0.544 & \bf 0.599 & \bf 1.326 & \bf 0.730 & \bf 0.207 \\ \midrule
      $N=10000$ & D & I & L & D & I & L & D & I & L  \\ \midrule
      \textsc{TopK 20\%$N$}       & 6.395 & 6.995 & 6.381 & 78.632 & 63.762 & 43.556 & 227.36 & 171.97 & 141.91    \\ 
      \textsc{TopK 50\%$N$}       & 2.134 & 2.013 & 1.647 & 35.929 & 26.677 & 17.099 & 85.063 & 59.877 & 46.853   \\ 
      \textsc{RDP \textbf{1\%}$N$} (ours)  & 0.078 & 0.616 & 0.734 & 3.376  & 5.012  & 7.256  & 6.450 & 6.379 & 4.150   \\ 
      \textsc{RDP 10\%$N$} (ours) & 0.024 & 0.031 & 0.024 & 0.299  & 0.447  & 0.576  & 0.513 & 1.539 & 0.275   \\ 
      \textsc{RDP 20\%$N$} (ours) & \bf 0.004 & \bf 0.003 & \bf 0.003 & \bf 0.148  & \bf 0.246  & \bf 0.294  & \bf 0.144 & \bf 0.080 & \bf 0.068    \\ 
      \bottomrule
  \end{tabular}
\end{table*}
\begin{table}[t]
  \caption{
  Bias-Variance decomposition of Randomized Forward.
  }
  \label{tab:exp:biasvar}
  \small
  \centering
  \begin{tabular}{@{}lcccc@{}}
      \toprule
       & \multicolumn{2}{c}{Dense}   & \multicolumn{2}{c}{Long-tail} \\
       & Bias & Var. & Bias & Var.  \\ \midrule
      \textsc{TopK 20\%$N$} & -1.968 & 0  & -0.403 & 0 \\ 
      \textsc{TopK 50\%$N$} & -0.995 & 0  & -0.177 & 0  \\ 
      \textsc{RDP 1\%$N$} (ours)  &  -0.066 & 0.141  & -0.050 & 0.074 \\ 
      \textsc{RDP 10\%$N$} (ours)  & -0.030 & 0.066 & -0.027 & 0.054  \\ 
      \textsc{RDP 20\%$N$} (ours)  & -0.013 & 0.046  & -0.003 & 0.026 \\ 
      \bottomrule
  \end{tabular}
\end{table}
\section{Scaling Structured VAE with RDP}
\label{sec:vae_example}

We study a concrete structured VAE example that uses our RDP
algorithms for scaling.  We focus on the language domain, however our
method generalizes to other types of sequences and structures.  We will use
the randomized Gumbel backward sampling algorithm for gradient estimation and the
randomized Entropy DP for regularization.  

\textbf{Generative Model}\quad
Let $\vx = [x_1, ..., x_T]$ be a sequence of discrete latent states, and $\vy= [y_1, ..., y_T]$ a sequence of observed words (a word is an observed categorical variable). 
We consider an autoregressive generative model parameterized by an LSTM decoder:
\begin{align}
  &p_\psi(\vx, \vy) = \prod_t p_\psi(x_t | x_{<t}, y_{<t}) \cdot p_\psi(y_t | x_t, x_{<t}, y_{<t}) \notag\\ 
  &p_\psi(x_t | x_{<t}, y_{<t}) = \softmax(\text{MLP}(\text{LSTM}(x_{<t}, y_{<t}))) \notag \\
  &p_\psi(y_t | x_t, x_{<t}, y_{<t}) = \softmax(\text{MLP}(\text{LSTM}(x_{1:t}, y_{<t}))) \notag
\end{align}
\textbf{Inference Model}\quad Since the posterior is intractable, we
use a structured inference model parameterized by a CRF with a neural
encoder:
\begin{align}
  q_\theta(\vx | \vy) = \prod_t \Phi(x_{t-1}, x_t)\phi(x_t, y_t) / Z
\end{align}
where $\Phi$ is the $N \times N$ transition potential matrix and $\phi$ is the emission potential.
This formulation has been previously used in sequence tagging and
sentence generation~\citep{ammar2014conditional,
  li-rush-2020-posterior} for a small~$N$. With large~$N$, 
 parametrization and inference require careful treatment. 
Specifically, we firstly use a neural encoder to map $\vy$ to a set of contextualized representations: 
\begin{align}
  [\vr_1, ..., \vr_T] = \text{Enc}(y_1, ..., y_t)
\end{align}
As direct parametrization of an $N \times N$ transition matrix would be memory expensive when $N$ is large (e.g., when $N=10^5$), we decompose it by associating each state $i$ with a state embedding~$\ve_i$.
Then, the potentials are:
\begin{align}
  \Phi(x_{t-1}, x_t) = \ve_{x_{t-1}}^\intercal \ve_{x_{t}} \quad\quad \phi(x_t, y_t) = \ve_{x_t}^\intercal \vr_t \label{eq:transition_decomp}
\end{align}
Training uses the following format of ELBO:
\begin{align}
  \mathcal{L} = \mathbb{E}_{q_\theta(\vx | \vy)}[\log p_\psi(\vx, \vy)] + H(q_\theta(\vx | \vy))
\end{align}
where the entropy (and its gradients) is estimated by Alg.~\ref{alg:entropy}. 
Stochastic gradients of the first term is obtained by differentiating samples from randomized gumbel sampling:
\begin{align}
  \nabla_\theta \mathbb{E}_{q_\theta}[\log p_\psi(\vx, \vy)] \approx \nabla_\theta \log p_\psi(\tilde{\vx}(\theta), \vy), \tilde{\vx}(\theta) \sim \text{Alg.~\ref{alg:gumbel_ffbs}} \label{eq:reparam_grad}
\end{align} 
note that the reparameterized sample $\tilde{\vx}(\theta)$ is a function of $\theta$.
Equation~\ref{eq:reparam_grad} gives low-variance gradients of the inference network, which is ready for gradient-based optimization.

\begin{table}[t]
  \caption{
   Proposal effect on Randomized Forward. 
  }
  \label{tab:exp:proposal}
  \small
  \centering
  \begin{tabular}{@{}lccc@{}}
      \toprule
       & Dense & Interm. & Long-tail  \\ \midrule
      \textsc{Uniform} & 0.655 & 11.05 & 0.160\\ 
      \textsc{Local} & 2.908 & 9.171 & 0.481  \\ 
      \textsc{Global} &  0.453 & 0.096 & 0.100 \\ 
      \textsc{Local + Global} & \bf 0.028 & \bf 0.017 & \bf 0.022 \\ 
      \bottomrule
  \end{tabular}
\end{table}

\begin{figure*}[t!]
  \centering
  \includegraphics[width=0.9\linewidth]{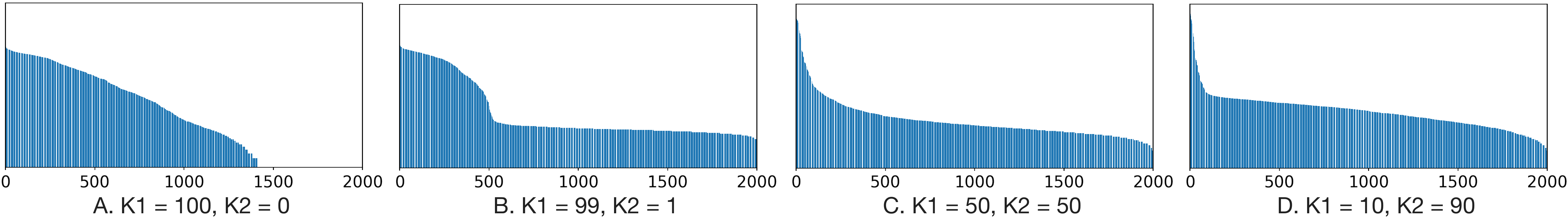}
  \caption{Modulating the posterior by controlling $K_1 / K_2$. 
  The topK only appproach leads to a collapsed posterior, which is overcome by  $K_2$ randomization.   Increasing $K_2$ encourages exploring less certain states, and
  consequently leads to an increasingly heavier tail. 
  }
  \label{fig:control_k1k2}
\end{figure*}

\textbf{Proposal Construction}\quad 
we consider the following: 
\begin{align}
  q_t(i) = \frac{\phi(i, y_t)}{2 \sum_{j=1}^N \phi(j, y_t)} + \frac{\norm{\ve_i}_1}{2\sum_{j=1}^N \norm{\ve_j}_1} \label{eq:proposal}
\end{align} 
which consists of: (a). local weights (the first term is the normalized
local emissions) where the intuitions is that states with larger local weights are more
likely to be observed and (b). a global prior (the second
term) where the intuition is that larger norms are usually
associated with large dot products, and thus larger potentials.
Note that there is no single fixed recipe of proposal construction since it is usually related to specific parameterization. 
For general structured models,
we recommend considering utilizing any information that is potentially correlated with the actual summands, like the local and global weights we consider here.

\textbf{Exploration-Exploitation Tradeoff}\quad 
It is important to note that gradients only pass through the top $K_1$
indices~$\Omega_t^{K_1}$ and tail samples~$\Omega_t^{K_2}$ during
training.  This means that
 increasing $K_1$  leads to exploiting  states that are already confident enough while 
 increasing $K_2$ leads to exploring less confident states.  So the ratio $K_1 /
 K_2$ also represents an exploration-exploitation tradeoff when
 searching for meaningful posterior structures.
 We will further see its effect in the experiments.

\section{Experiments}
\label{sec:experiments}
We first evaluate the estimation error of individual RDP algorithms
for three unit cases, namely when the underlying distributions are
long-tail, intermediate level, or dense.  Then we evaluate RDP for
training the structured VAE.

\subsection{Evaluation of RDP Algorithms}

\textbf{Experimental Setting}\quad 
We evaluate the mean square error (MSE) for
the estimation of linear-chain log partition (Alg.~\ref{alg:forward}),
hypertree log partition (Alg.~\ref{alg:inside}), and linear-chain
entropy (Alg.~\ref{alg:entropy}) on the three types of distributions.
We simulate the distributions by controlling their entropy: the
smaller the entropy, the more long-tailed the underlying distribution
is.  We set $N$, the number of states, to be 2,000 and 10,000, which is
orders of magnitudes larger than previous
work~\citep{wiseman-etal-2018-learning, yang2021pcfgs}.  We run RDPs
100 times and calculate their MSE against the exact results from the
full DP.  For all estimators, we set $K_2 = 1$ and $K_1 = K - 1$, and
control $K$ to be [1, 10, 20] percent of $N$.
Also note that $K$ is independent of $N$ and we recommend adjusting it according to the computation budget.
For linear-chains, we
use the proposal discussed in Eq.~\ref{eq:proposal}.  For hypertrees,
we use a uniform proposal.

\textbf{Comparison Models}\quad 
Since there is only a handful of
methods that apply to this range of structures and inference, we
choose topK summation \cite{sun2019fast} as our baseline.  This method
was originally developed for linear-chain partition function approximation, and
can be viewed as setting $K_2=0$ in our case (topK summation only, no
sample).  This method is also used in~\citet{correia2020efficient},
however, they only consider sparse underlying distributions.  It is
important to differentiate between \textit{long-tailed} and
\textit{sparse} distributions.  While in both cases head elements
consume a large portion of the probability mass, sparse means that
tail elements have no probability while long-tailed means that tail
elements have non-negligible probability.  This entails that the topK
approach \cite{sun2019fast,correia2020efficient} may significantly
\textit{underestimate} the partition function and entropy.

\begin{table}[t]
  \caption{
  Negative Log Likelihood comparison. 
  }
  \label{tab:exp:nll}
  \small
  \centering
  \begin{tabular}{@{}lcc@{}}
      \toprule
       & Dev NLL & Test NLL  \\ \midrule
      \textsc{Full-100} & 39.64 $\pm$ 0.06 & 39.71 $\pm$ 0.07 \\ 
      \textsc{TopK-100} & 39.71 $\pm$ 0.13 & 39.76 $\pm$ 0.11 \\ 
      \textsc{RDP-100} (ours)  & \bf 39.59 $\pm$ 0.10 & \bf 39.59 $\pm$ 0.08\\ \midrule
      \textsc{TopK-2000} (ours)  & 39.81 $\pm$ 0.30 & 39.84 $\pm$ 0.31 \\ 
      \textsc{RDP-2000} (ours)  & \bf 39.47 $\pm$ 0.11 & \bf 39.48 $\pm$ 0.14 \\ 
      \bottomrule
  \end{tabular}
\end{table}

\textbf{Results}\quad
Table~\ref{tab:exp:overall} shows the MSE results. 
Our method outperforms the topK baseline  on all distributions with significantly less memory budgets.
We highlight two important observations: 
(1) since  TopK summation requires sparsity, it consistently underestimates the partition and entropy, particularly on dense distributions, 
while RDP does not have this limitation; and
(2)~by choosing only \textit{1 percent} of the full index, we are able to outperform TopK summation in many cases and obtain decent estimates.

Table~\ref{tab:exp:biasvar} further shows how the computation budget
of the randomized Forward algorithm can be used  to control the
bias-variance tradeoff.  As can be seen, 
By increasing the computation budget $K$ (which is still smaller
than full size), we are able to simultaneously reduce bias and variance. 
Table~\ref{tab:exp:proposal} shows different proposals influence the
randomized Forward algorithm. We observe that a correlated proposal
(as introduced in Eq.~\ref{eq:proposal}) indeed improves estimates of
the randomized Forward.  In general, scaled models typically require
certain types of decomposition (see examples in~\citet{chiu2021lowrank} and~\citet{yang2021pcfgs}) and proposals are closely related to a
specific parametrization.
For general structures, one can follow the local-global principle to
construct proposals, or retreat to a uniform proposal.

\subsection{Evaluation of the Structured VAE}

We now  use RDP to scale the latent space of
the structured VAE.  This experiment primarily tests the performance
of the randomized Gumbel backward sampling algorithm
(Alg.~\ref{alg:gumbel_ffbs}), which is used as a reparameterized
gradient estimator for training the VAE.  

\textbf{Experimental Setting}\quad
We use an LSTM with 256 dimensional hidden states for the generative
model.  For the inference model, we use a pretrained GPT2 (base size)
to show our method's compatibility with pretrained language models.
We follow \citet{Fu2020GumbelCRF} and use the MSCOCO dataset and reuse
their processed data for simplicity.  Our setting scales the original
Gumbel-CRF paper~\citep{Fu2020GumbelCRF}.  Specifically, we set $N =
[100, 2,000]$.  With $N=100$ we can perform full DP which still gives
biased gradients due to the continuous relaxation.  With $N=2,000$,
full DP gives memory overflow on a 16G GPU, so we only compare to the
TopK approach.  We use $K_1 = K_2 = 10\%N$.  We follow standard
practice and report negative log likelihood estimated by importance
sampling~\citep{kim2019unsupervised}.

\textbf{Results}\quad Table~\ref{tab:exp:nll} shows the performance of
Latent Variable Model training with varying $N$ size. 
We observe that RDP outperforms the baselines for both $N=100$ and
$N=2,000$, which demonstrates its effectiveness for training
VAEs. Furthermore, the overall NLL decreases as~$N$ increases from 100
to 2,000, which underscores the advantage of scaling.
We further show that the ratio of $K_1/K_2$
can be used for modulating the
posterior and prevent posterior collapse.  Specifically, we draw the
frequency of states of the aggregated posterior.  Recall that the
aggregated posterior is obtained by sampling the latent states from
the inference network for all training instances, then calculating the
overall frequency~\citep{mathieu2019disentangling}.
Figure~\ref{fig:control_k1k2} shows that  topK summation
leads to the posterior collapse as there exist multiple inactive
states (frequency~=~0). 
Here it is important to note that gradients only pass through the top $K_1$
indices~$\Omega_t^{K_1}$ and tail samples~$\Omega_t^{K_2}$ during
training.  This means that
increasing $K_1$  leads to exploiting  states that are already confident enough during training, while 
increasing $K_2$ leads to exploring less confident states,
which consequently leads to
increased tail frequency of the aggregated posterior (after
convergence).

\section{Conclusion}
\label{sec:conclusion}
This work proposes 
the randomized sum-product
algorithms for scaling structured models to tens of thousands of
latent states.  Our method is widely applicable to classical DP-based
inference 
and different graph structures.
Our methods reduce computation complexity by orders of
magnitude compared to existing alternatives and is compatible with
automatic differentiation and modern neural networks.  We hope this
method will open new possibilities of large-scale deep structured
prediction models.

\section*{Acknowledgements}
We thank Hao Tang and Ivan Titov for insightful comments on the practical issues of automatic differentiation. 
ML is supported by European Research Council
(ERC CoG TransModal 681760)  and EPSRC (grant no EP/W002876/1).
JPC is supported by the Simons Foundation, McKnight Foundation, Grossman Center
for the Statistics of Mind, and Gatsby Charitable Trust.

\bibliography{RDP}
\bibliographystyle{icml2022}

\newpage
\appendix
\onecolumn


\section{Biasedness Analysis of Randomized Forward}
\label{sec:app:theory:bias}

In this section, we discuss the unbiasedness and variance of the Randomized Forward algorithm. 
We first show that Randomized Forward gives an unbiased estimator of the partition function. 

\begin{theorem}[Unbiasedness]
  \label{thm:unbiasedness}
  For all $t \in [1, 2, ..., T]$, the sampled sum $\hat{\alpha}_t$ (Eq.~\ref{eq:forward_recursion}) is an unbiased estimator of the forward variable $\alpha_t$. 
  The final sampled sum $\hat{Z}$ is an unbiased estimator of the partition function $Z$. 
\end{theorem}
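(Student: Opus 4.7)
The plan is to establish $\mathbb{E}[\hat\alpha_t(i)] = \alpha_t(i)$ for every $t \in \{1,\ldots,T\}$ and every state $i$ by induction on $t$; the unbiasedness of $\hat Z$ will then follow from a single additional application of the same importance-sampling argument used in the inductive step. The base case is immediate: for $t=1$ the initialization $\hat\alpha_1(i)=\phi(x_0=\emptyset,x_1=i,y_1)$ is deterministic and coincides with $\alpha_1(i)$ by definition.

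For the inductive step, I will condition on the $\sigma$-algebra $\mathcal{F}_{t-1}$ generated by the tail samples at positions $1,\ldots,t-2$, under which $\hat\alpha_{t-1}(\cdot)$ is fixed and the only remaining randomness in the recursion of Eq.~\ref{eq:forward_recursion} comes from the fresh i.i.d.\ tail draws $\{\delta_j\}_{j=1}^{K_2} = \Omega_{t-1}^{K_2}$ from the proposal in Eq.~\ref{eq:q_proposal}. The top-$K_1$ contribution is deterministic under this conditioning, while for each tail draw a single-sample importance-sampling identity
\begin{equation*}
\mathbb{E}_{\delta \sim \vq_{t-1}}\!\left[\frac{\hat\alpha_{t-1}(\delta)\,\phi(\delta,i,y_t)}{K_2\,q_{t-1}(\delta)}\right] \;=\; \frac{1}{K_2}\!\sum_{j \notin \Omega_{t-1}^{K_1}} \hat\alpha_{t-1}(j)\,\phi(j,i,y_t)
\end{equation*}
holds, and summing $K_2$ such i.i.d.\ terms recovers the full tail sum. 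Combining the two parts therefore yields $\mathbb{E}[\hat\alpha_t(i)\mid\mathcal{F}_{t-1}] = \sum_{j=1}^N \hat\alpha_{t-1}(j)\,\phi(j,i,y_t)$; taking outer expectation and invoking the induction hypothesis gives $\mathbb{E}[\hat\alpha_t(i)] = \sum_{j=1}^N \alpha_{t-1}(j)\,\phi(j,i,y_t) = \alpha_t(i)$. The statement for $\hat Z$ is then structurally identical: $\hat Z$ is a top-$K_1$ plus importance-weighted tail sum applied once more to $\{\hat\alpha_T(j)\}_j$, so the same one-line argument delivers $\mathbb{E}[\hat Z] = \sum_j \alpha_T(j) = Z$.

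The main delicate point I expect is pinning down the proposal's support and normalization precisely enough that the weight $1/(K_2\,q_{t-1}(\delta))$ really does give an unbiased estimator of the tail sum under the sampling rule of Eq.~\ref{eq:q_proposal}: it matters whether $\vq_{t-1}$ is treated as a distribution over all $N$ states or as renormalized over the tail $\{1,\ldots,N\}\setminus\Omega_{t-1}^{K_1}$, and the weight must be chosen compatibly with that choice. A secondary subtlety is the ordering of the two expectations: because the samples $\Omega_{t-1}^{K_2}$ introduced by the step-$t$ recursion are independent of all the earlier sampling that produced $\hat\alpha_{t-1}$, the induction hypothesis must be invoked only \emph{after} taking the inner conditional expectation, never simultaneously. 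Once these two points are settled, the argument reduces to a recursive application of the unbiased sum-and-sample identity of Eq.~\ref{eq:tailsum}.
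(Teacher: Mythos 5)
Your proposal is correct and follows essentially the same route as the paper's proof: induction over $t$, with the inductive step resting on the unbiasedness of the top-$K_1$-plus-importance-weighted-tail estimator (the sum-and-sample identity of Eq.~\ref{eq:tailsum}) applied to the fixed quantities $\hat{\alpha}_{t-1}(\cdot)$, followed by linearity and the induction hypothesis, and one final application of the same identity for $\hat{Z}$. The only difference is presentational: you formalize the independence of the fresh tail draws from past randomness via conditioning on $\mathcal{F}_{t-1}$ and the tower property, while the paper expands the sampled term as a masked summation with indicators $\mathbbm{1}(\delta_{t,k}=j)$ and factorizes the expectation using $q_{1:t}=q_{1:t-1}\cdot q_t$ — the two delicate points you flag (tail-renormalized proposal with compatible weight, and taking the inner expectation before invoking the hypothesis) are exactly what that bookkeeping handles.
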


\begin{proof} By the Second Principle of Mathematical Induction. Assume initialization $\alpha_1(i) = \phi(x_1, i)$. Firstly at $t = 2$, for all $i$, we have:
  \begin{align}
    \mathbb{E}_{q_1}[\hat{\alpha}_2(i)] = 
      \sum_{j = 1}^{K_1} \alpha_1(\sigma_{1, j})\Phi(\sigma_{1, j}, i)\phi(x_2, i) 
      + \frac{1}{K_2} \underbrace{ \sum_{j=1}^{K_2} \mathbb{E}_{q_1}\bigg[\frac{\tilde{Z}_1}{\tilde{q}_1(\delta_{1, j})} \alpha_1(\delta_{1, j})\Phi(\delta_{1, j})\phi(x_2, i)}_{=A} \bigg] \label{eq:app1}
  \end{align}
  where the second term can be expanded as a masked summation with the index rearranged from $\sigma_{2, K_1 + 1}$ to $\sigma_{2, N}$: 
  \begin{align}
    A &= \sum_{j=1}^{K_2} \mathbb{E}_{q_1} \bigg[\frac{\tilde{Z}_1}{\tilde{q}_1(\delta_{1, j})} \alpha_1(\delta_{1, j})\Phi(\delta_{1, j})\phi(x_2, i)  \bigg]\\ 
      &= \sum_{k = 1}^{K_2} \sum_{j = K_1 + 1}^N \mathbb{E}_{q_1}\bigg[ \frac{1}{q_1(\delta_{1, k})}\mathbbm{1}(\delta_{1, k} = j) \alpha_1(\sigma_{1, j})\Phi(\sigma_{1, j})\phi(x_2, i) \bigg] \\ 
      &= \sum_{k = 1}^{K_2} \sum_{j = K_1 + 1}^N \underbrace{\mathbb{E}_{q_1}\bigg[ \frac{1}{q_1(\delta_{1, k})}\mathbbm{1}(\delta_{1, k} = j)\bigg]}_{=1} \alpha_1(\sigma_{1, j})\Phi(\sigma_{1, j})\phi(x_2, i) \\ 
      &= K_2  \sum_{j = K_1 + 1}^N \alpha_1(\sigma_{1, j})\Phi(\sigma_{1, j})\phi(x_2, i)
  \end{align}
  Notice how we re-index the sum. Now put it back to Eq.~\ref{eq:app1} to get:
  \begin{align}
    \mathbb{E}_{q_1}[\hat{\alpha}_2(i)] &= 
      \sum_{j = 1}^{K_1} \alpha_1(\sigma_{1, j})\Phi(\sigma_{1, j}, i)\phi(x_2, i) + \frac{1}{K_2} \cdot K_2  \sum_{j = K_1 + 1}^N \alpha_1(\sigma_{1, j})\Phi(\sigma_{1, j})\phi(x_2, i) \\ 
      &= \sum_{j = 1}^{N} \alpha_1(\sigma_{1, j})\Phi(\sigma_{, j}, i)\phi(x_2, i) \\ 
      &= \sum_{j = 1}^{N} \alpha_1(j)\Phi(j, i)\phi(x_2, i) \\ 
      &= \alpha_2(i)
  \end{align}
  This verifies the induction foundation that $\mathbb{E}_{q_1}[\hat{\alpha}_2(i)] = \alpha_2(i)$ for all $i$. 

  Now assume for all time index up to $t$ we have: 
  \begin{align}
    \forall i, \mathbb{E}_{q_{1:t - 1}}[\hat{\alpha}_t(i)] = \alpha_t(i)
  \end{align}
  Consider $t + 1$, we have: 
  \begin{align}
  \mathbb{E}_{1: q_t}[\hat{\alpha}_{t + 1}(i)] &= 
      \sum_{j = 1}^{K_1} \mathbb{E}_{q_{1:{t-1}}}\big[\hat{\alpha}_t(\sigma_{t, j})\big] \Phi(\sigma_{t, j}, i)\phi(x_{t + 1}, i) \\
      &\quad \quad+ \frac{1}{K_2} \sum_{j=1}^{K_2} \mathbb{E}_{q_{1:t-1}}\big[\hat{\alpha}_t(\delta_{t, j}) \big] \cdot  \mathbb{E}_{q_{t}}\big[ \frac{\tilde{Z}_t}{\tilde{q}_t(\delta_{t, j})} \Phi(\delta_{t, j})\phi(x_{t + 1}, i)] \big] \\ 
      &= \sum_{j = 1}^{K_1} \alpha_t(\sigma_{t, j}) \Phi(\sigma_{t, j}, i)\phi(x_{t + 1}, i) \\
      &\quad \quad+ \frac{1}{K_2} \underbrace{\sum_{j=1}^{K_2} \alpha_t(\sigma_{t, j}) \cdot  \mathbb{E}_{q_{t}}\big[ \frac{\tilde{Z}_t}{\tilde{q}_t(\delta_{t, j})} \Phi(\delta_{t, j})\phi(x_{t + 1}, i)] \big]}_{ = A'}
  \end{align}
  Note how we decompose the expectation by using the independence: $q_{1:t} = q_{1:t-1} \cdot q_t$
  With a similar masked summation trick as $A$, we have:
  \begin{align}
    A' = K_2 \sum_{j = K_1 + 1}^N \alpha_t(\sigma_{t, j}) \Phi(\sigma_{t, j}, i)\phi(x_{t + 1}, i)
  \end{align}
  This gives us:
  \begin{align}
    \mathbb{E}_{1: q_t}[\hat{\alpha}_{t + 1}(i)] &= \sum_{j = 1}^{K_1} \alpha_t(\sigma_{t, j}) \Phi(\sigma_{t, j}, i)\phi(x_{t + 1}, i)  + \frac{1}{K_2} \cdot K_2 \sum_{j = K_1 + 1}^N \alpha_t(\sigma_{t, j}) \Phi(\sigma_{t, j}, i)\phi(x_{t + 1}, i) \\ 
    &= \sum_{j = 1}^{N} \alpha_t(\sigma_{t, j}) \Phi(\sigma_{t, j}, i)\phi(x_{t + 1}, i) \\ 
    &= \sum_{j = 1}^{N} \alpha_t(j) \Phi(j, i)\phi(x_{t + 1}, i) \\ 
    &= \alpha_{t + 1}(i)
  \end{align}
  Thus showing $\hat{\alpha}_{t+1}$ is also an unbiased estimator for $\alpha_{t+1}$ at each step $t+1$. 
  Setting $t = T$, the last step, gives us $\mathbb{E}[\hat{Z}] = Z$ (details similar to the above). 
\end{proof}

\begin{corollary}
  When we change the (sum, product) semiring to the (log-sum-exp, sum) semiring, the expectation of the estimator will become a lower bound of $\log \alpha_t$ and $\log Z$. 
\end{corollary}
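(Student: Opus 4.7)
The plan is to reduce this to a direct application of Jensen's inequality, leveraging the unbiasedness result established in Theorem~\ref{thm:unbiasedness}. The key observation is that running the Randomized Forward in the $(\text{log-sum-exp}, \text{sum})$ semiring produces exactly the logarithm of the quantity produced by the same algorithm (on the same samples) in the $(\text{sum}, \text{product})$ semiring; only the arithmetic is carried out in log-space for numerical stability. So the output in log-space is $\log \hat{\alpha}_t(i)$, where $\hat{\alpha}_t(i)$ is the original-space estimator analyzed in Theorem~\ref{thm:unbiasedness}.

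First, I would verify the semiring-equivalence by a short induction on $t$. For the base case $t=1$, initialization is identical in both semirings (log-potentials rather than potentials). For the inductive step, assuming the log-space forward variable at time $t-1$ equals $\log \hat{\alpha}_{t-1}$, the log-space recursion is
\begin{align}
\hat{\alpha}^{\log}_t(i) = \mathrm{logsumexp}\Big(
&\big\{\log\hat{\alpha}_{t-1}(\sigma) + \log\phi(\sigma,i,y_t) : \sigma \in \Omega_{t-1}^{K_1}\big\} \;\cup\notag\\
&\big\{\log\hat{\alpha}_{t-1}(\delta) + \log\phi(\delta,i,y_t) - \log(K_2 q_t(\delta)) : \delta \in \Omega_{t-1}^{K_2}\big\}
\Big),
\end{align}
which is precisely $\log \hat{\alpha}_t(i)$ with $\hat{\alpha}_t$ defined by Eq.~\ref{eq:forward_recursion}. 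Having identified the two outputs up to a logarithm, I can apply Jensen's inequality to the concave function $\log$: since $\mathbb{E}[\hat{\alpha}_t(i)] = \alpha_t(i)$ by Theorem~\ref{thm:unbiasedness}, it follows that
\begin{align}
\mathbb{E}\big[\hat{\alpha}^{\log}_t(i)\big] \;=\; \mathbb{E}\big[\log \hat{\alpha}_t(i)\big] \;\le\; \log \mathbb{E}\big[\hat{\alpha}_t(i)\big] \;=\; \log \alpha_t(i).
\end{align}
The same argument applied at $t=T$ to the terminal estimator $\hat{Z}$ yields $\mathbb{E}[\log \hat{Z}] \le \log Z$.

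I do not expect a serious obstacle here; the proof is essentially a one-line consequence of Jensen's inequality once the semiring correspondence is spelled out. The only subtlety worth being careful about is that Jensen's inequality must be applied to the \emph{joint} expectation over all sampling randomness $q_{1:T}$ at once, rather than stepwise, because $\hat{\alpha}^{\log}_t(i)$ depends nonlinearly on the earlier log-estimates $\log \hat{\alpha}_{t-1}(\cdot)$; a stepwise Jensen argument would compound extra looseness and misstate the inequality. Treating $\hat{\alpha}_t(i)$ as a single random variable (unbiased by Theorem~\ref{thm:unbiasedness}) and applying Jensen once avoids this pitfall.
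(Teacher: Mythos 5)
Your proof is correct and takes essentially the same route as the paper: identify the log-semiring output as $\log\hat{\alpha}_t$ and apply Jensen's inequality for the concave $\log$ together with the unbiasedness of Theorem~\ref{thm:unbiasedness}. The only difference is structural---the paper applies Jensen inside a step-by-step induction, which is also valid because each step folds the expectation back onto the original-space variables $\exp(\hat{l}_t)=\hat{\alpha}_t$ rather than onto the bounds $\mathbb{E}[\hat{l}_t]\le l_t$, so your caution that a stepwise argument would misstate the inequality is overstated, though it does not affect the correctness of your argument.
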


\begin{proof}
  Denote $l_t(i) = \log \alpha_t(i)$ and $\Omega$ the set of sampled indices where $|\Omega| = K_2$. For simplicity, we omit the top $K_1$ summation and only show the summation of the sample. 
  Cases where $t > 2$ can be derived similarly as following: 
  \begin{align}
    \hat{l}_{2}(i) &= \log \sum_{j \in \Omega} \exp (l_1(j) + \log \Phi(j, i) + \log \phi(x_t, i)) - \log K_2 \\ 
    \mathbb{E}[\hat{l}_{2}(i)] &= \mathbb{E}\bigg[\log \sum_{j \in \Omega} \exp (l_1(j) + \log \Phi(j, i) + \log \phi(x_t, i))\bigg] - \log K_2 \\ 
    & \le \log \mathbb{E}\bigg[\sum_{j \in \Omega} \exp (l_1(j) + \log \Phi(j, i) + \log \phi(x_t, i))\bigg] \label{eq:app:jensen} - \log K_2\\ 
    &= \log K_2 - \log K_2 + \log \sum_{j \in \Omega} \exp (l_1(j) + \log \Phi(j, i) + \log \phi(x_t, i)) \\ 
    &= l_2(i)
  \end{align}
  where Eq.~\ref{eq:app:jensen} comes from Jensen's inequality. 
  Then by induction one can show at everystep, we have $\mathbb{E}[\hat{l}_t(i)] \le l_t(i)$. 
\end{proof}

Although implementation in the log space makes the estimate biased, it reduces the variance exponentially in a rather trivial way. 
It also provides numerical stability.
So, in practice we use it for training.

\section{Variance Analysis of Randomized Forward}
\label{sec:app:theory:variance}
Now we analyze variance. We start with the estimator $a_\delta / q_\delta$, $\delta \sim \text{Categorical}\{q_{K_1 +1}, ... q_{K_N}\}$ in Eq.~\ref{eq:tailsum}. Firstly, this is an unbiased estimator of the tail sum:
\begin{align}
  \mathbb{E}[a_\delta / q_\delta] = \sum_{i = K_1 + 1}^N a_i
\end{align}

We have the folling variance arguments:
\begin{theorem}[Variance of the tail estimator]
  \label{thm:tail_variance}
  \begin{align}
    \mathbb{V}[a_\delta / q_\delta] &= \sum_{i=K_1 + 1}^N a_i^2 / q_i - (\sum_{i=K_1 + 1}^N a_i)^2 \label{eq:var_1}\\ 
    &=  S_{K_1}^2 (\sum_{i=K_1 + 1}^N \epsilon_i^2 / q_i + 2\epsilon_i)
  \end{align}
  where 
  \begin{align}
    S_{K_1} =  \sum_{i = K_1 + 1}^N a_i \quad \quad \quad \quad
    \epsilon_i = a_i / S_{K_1} - q_i
  \end{align}
  which says the variance is:
  \begin{itemize}
    \item quadratic to the tail sum $S_{K_1}$, which will can be reduced by increasing $K_1$ as the effect of Rao-Blackwellization.
    \item approximately quadratic to the gap $\epsilon_i$, as the differences between the proposal $q_i$ and the oracle $a_i / S_{K_1}$, which can be reduced by choosing a correlated proposal as the effect of importance sampling.
  \end{itemize}

  Optimal zero variance is achieved on
  \begin{align}
    q_i^* = a_i / S_{K_1} 
  \end{align}
\end{theorem}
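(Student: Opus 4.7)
The plan is to apply the standard variance decomposition $\mathbb{V}[X] = \mathbb{E}[X^2] - (\mathbb{E}[X])^2$ to the estimator $X = a_\delta/q_\delta$, and then rewrite the resulting expression in terms of the deviations $\epsilon_i$. The optimal proposal follows from a standard Lagrange-multiplier argument. There is no serious obstacle here; the main care needed is bookkeeping of which indices the sum runs over and remembering that $\vq = (q_{K_1+1}, \dots, q_N)$ is a probability distribution on the tail, so $\sum_{i=K_1+1}^N q_i = 1$.

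For the first equality, I would compute directly. Since $\delta \sim \mathrm{Categorical}\{q_{K_1+1}, \dots, q_N\}$, we have
\begin{align}
\mathbb{E}\bigl[a_\delta/q_\delta\bigr] &= \sum_{i=K_1+1}^N q_i \cdot \frac{a_i}{q_i} = S_{K_1}, \\
\mathbb{E}\bigl[(a_\delta/q_\delta)^2\bigr] &= \sum_{i=K_1+1}^N q_i \cdot \frac{a_i^2}{q_i^2} = \sum_{i=K_1+1}^N \frac{a_i^2}{q_i}.
\end{align}
Subtracting the square of the mean yields Eq.~\ref{eq:var_1} immediately.

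For the second equality, I would substitute $a_i = S_{K_1}(q_i + \epsilon_i)$ and expand:
\begin{align}
\frac{a_i^2}{q_i} = S_{K_1}^2 \cdot \frac{(q_i + \epsilon_i)^2}{q_i} = S_{K_1}^2 \left(q_i + 2\epsilon_i + \frac{\epsilon_i^2}{q_i}\right).
\end{align}
Summing over $i = K_1+1,\dots,N$ and using $\sum_i q_i = 1$ along with $\sum_i \epsilon_i = \sum_i (a_i/S_{K_1} - q_i) = 1 - 1 = 0$, the linear term vanishes and I obtain $\sum_i a_i^2/q_i = S_{K_1}^2 (1 + \sum_i \epsilon_i^2/q_i)$. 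Subtracting $S_{K_1}^2 = (\sum_i a_i)^2$ delivers the claimed form (the $2\epsilon_i$ term in the statement really represents $2\sum_i \epsilon_i$, which is $0$, so it drops out).

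For the optimal proposal, I would minimize $f(\vq) = \sum_{i=K_1+1}^N a_i^2/q_i$ subject to $\sum_i q_i = 1$ via Lagrange multipliers. The stationarity condition $-a_i^2/q_i^2 + \lambda = 0$ gives $q_i \propto a_i$, and normalization forces $q_i^\star = a_i/S_{K_1}$. Plugging $q_i^\star$ back into Eq.~\ref{eq:var_1} yields $\sum_i a_i^2 \cdot S_{K_1}/a_i - S_{K_1}^2 = S_{K_1}^2 - S_{K_1}^2 = 0$, confirming zero variance. Equivalently, $\epsilon_i \equiv 0$ under $q^\star$, so the second form of the variance also vanishes, which doubles as a consistency check of the two expressions.
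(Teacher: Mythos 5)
Your proposal is correct and follows essentially the same route as the paper: compute $\mathbb{E}[a_\delta/q_\delta]$ and the second moment directly, subtract, substitute $a_i = S_{K_1}(q_i+\epsilon_i)$, and obtain the optimal proposal by Lagrange multipliers. Your additional observation that $\sum_{i=K_1+1}^N \epsilon_i = 0$ (so the $2\epsilon_i$ term in the stated formula vanishes when the proposal is normalized over the tail) is consistent with the paper's expression and is a nice simplification, not a discrepancy.
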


\begin{proof}
The variance is then:
\begin{align}
  \mathbb{V}[a_\delta / q_\delta] =  \mathbb{E}[(a_\delta / q_\delta)^2] - \mathbb{E}[a_\delta / q_\delta]^2
\end{align}
where the first term is the second moment: 
\begin{align}
  (a_\delta / q_\delta)^2 &= (\sum_{i = K_1 + 1}^ N a_i \mathbbm{1}[\delta = i] / q_i)^2 \\ 
  &= \sum_{i=K_1 + 1}^N a_i^2 \mathbbm{1}[\delta = i] / q_i^2 + \underbrace{2\sum_{i = K_1 + 1}^N \sum_{j = K_1 + 1}^N \frac{a_i a_j \mathbbm{1}[\delta = i] \mathbbm{1}[\delta = j]}{q_i q_j}}_{=0} \\ 
  &= \sum_{i=K_1 + 1}^N a_i^2 \mathbbm{1}[\delta = i] / q_i^2
\end{align}
taking the expection of it we get:
\begin{align}
  \mathbb{E}[(a_\delta / q_\delta)^2] 
  &= \mathbb{E}[\sum_{i=K_1 + 1}^N a_i^2 \mathbbm{1}[\delta = i] / q_i^2] \\ 
  &= \sum_{i=K_1 + 1}^N a_i^2 / q_i
\end{align}
Plug this back, variance is:
\begin{align}
  \mathbb{V}[a_\delta / q_\delta] = \sum_{i=K_1 + 1}^N a_i^2 / q_i - (\sum_{i=K_1 + 1}^N a_i)^2
\end{align}
To get the optimal proposal, we solve the constrained optimization problem:
\begin{align}
  &\min_{q_i} \sum_{i=K_1 + 1}^N a_i^2 / q_i - (\sum_{i=K_1 + 1}^N a_i)^2 \\ 
  &s.t. \sum_{i = K_1 + 1}^N q_i = 1
\end{align}
By solving the corresponding Lagrangian equation system (omitted here), we get the optimal value achieved at:
\begin{align}
  & q_i^* = \frac{a_i}{\sum_{i = K_1 + 1}^N a_i}
  & \sum_{i=K_1 + 1}^N a_i^2 / q_i^* - (\sum_{i=K_1 + 1}^N a_i)^2 =0 
\end{align}
This says zero variance is achieved by a proposal equal to the normalized summands.

Then define the gap between the proposal and the normalized summands as:
\begin{align}
  \epsilon_i = \frac{a_i}{S_{K_1}} - q_i
\end{align}
Plug this to the variance expression we get: 
\begin{align}
  \mathbb{V}[a_\delta / q_\delta] = S_{K_1}^2 (\sum_{i=K_1 + 1}^N \epsilon_i^2 / q_i + 2\epsilon_i )
\end{align}
\end{proof}

\begin{corollary}
  When increasing the sample size to $K_2$, the variance will reduce to 
  \begin{align}
  \mathbb{V}[\frac{1}{K_2} \sum_{j = 1}^N \frac{a_{\delta_j}}{q_{\delta_j}}] = \frac{1}{K_2} S_{K_1}^2 (\sum_{i=K_1 + 1}^N \epsilon_i^2 / q_i + 2\epsilon_i )
  \end{align}
\end{corollary}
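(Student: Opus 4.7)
The plan is to reduce the claim to a one-line consequence of Theorem~\ref{thm:tail_variance} via the standard variance-of-the-mean identity for i.i.d.\ random variables. First I would define the per-sample estimator $Y_j := a_{\delta_j}/q_{\delta_j}$, where the sampling scheme in Eq.~\ref{eq:q_proposal} guarantees that $\delta_1, \ldots, \delta_{K_2} \overset{\text{i.i.d.}}{\sim} \text{Categorical}\{q_{K_1+1}, \ldots, q_N\}$. By Theorem~\ref{thm:tail_variance}, each $Y_j$ has the common variance
\begin{align}
  \mathbb{V}[Y_j] = S_{K_1}^2 \bigg(\sum_{i=K_1 + 1}^N \epsilon_i^2 / q_i + 2\epsilon_i \bigg).
\end{align}

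Next I would invoke independence to eliminate all cross-covariance terms:
\begin{align}
  \mathbb{V}\bigg[\frac{1}{K_2}\sum_{j=1}^{K_2} Y_j\bigg]
  = \frac{1}{K_2^2} \sum_{j=1}^{K_2} \mathbb{V}[Y_j] + \frac{1}{K_2^2} \sum_{j \neq k} \mathrm{Cov}(Y_j, Y_k)
  = \frac{1}{K_2^2} \cdot K_2 \cdot \mathbb{V}[Y_1] = \frac{1}{K_2} \mathbb{V}[Y_1],
\end{align}
where the covariance sum vanishes by the independence of the $\delta_j$, and the identical distribution of the $Y_j$ collapses the remaining sum to a single term. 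Substituting the expression for $\mathbb{V}[Y_1]$ from Theorem~\ref{thm:tail_variance} gives the stated result.

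There is effectively no obstacle here: the reduction is immediate once Theorem~\ref{thm:tail_variance} is in hand, and the proof is purely mechanical. The only point worth flagging explicitly is that the argument hinges on the i.i.d.\ assumption from Eq.~\ref{eq:q_proposal}; if one instead sampled the $K_2$ indices without replacement (a natural alternative for further variance reduction), the covariance terms would no longer vanish and one would pick up a finite-population correction factor of the form $(N - K_1 - K_2)/(N - K_1 - 1)$. I would mention this as a remark to clarify the scope of the corollary, but no additional machinery is needed to establish the statement as written.
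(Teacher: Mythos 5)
Your proof is correct and is exactly the argument the paper relies on (the paper states this corollary without spelling out a proof, since it follows from Theorem~\ref{thm:tail_variance} by the standard variance-of-the-mean identity for the i.i.d.\ indices $\delta_1,\dots,\delta_{K_2}$). Note only that the upper limit in the corollary's sum should read $K_2$ rather than $N$, which you have implicitly and correctly assumed; your remark on sampling without replacement is a sensible clarification of scope but not needed for the statement as written.
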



Now we consider the variance of the Sampled Forward algorithm. 
An exact computation would give complicated results.
For simplification, we give an asymptotic argument with regard to Rao-Blackwellization and importance sampling:

\begin{theorem}[Single Step Asymptotic Variance of Sampled Forward]
  \label{thm:sampled_forward_variance}
  At each step, the alpha varible estimator has the following asymptotic variance:
  \begin{align}
    \mathbb{V}[\hat{\alpha}_t(i)] = O\Big(\frac{1}{K_2}\alpha^2_{t, K_1}(i) \cdot \epsilon^2_t(i) \Big) \label{eq:var_alpha_recursion_1}
  \end{align}
  where:
  \begin{itemize}
    \item $\alpha_{t + 1, K_1}(i) = \sum_{j = K_1 + 1}^N \tilde{\alpha}_t(j, i) = \sum_{j = K_1 + 1}^N \sqrt{\mathbb{E}_{q_{1:t-1}}[\hat{\alpha}^2_t(j)]} \Phi(j, i)\phi(x_t, i)$ is a tail sum after the top $K_1$ summands. This term will reduce if we increase $K_1$, as an instance of Rao-Blackwellization.
    \item $\epsilon^2_t(i) = \sum_{j = K_1 + 1}^N \epsilon^2_{t-1}(j, i) / q_{t-1}(j)$ and $\epsilon_{t-1}(j, i)$ is the difference between the proposal $q_{t-1}(j)$ and the oracle proposal. This term will reduce if the proposal is more correlated to the oracle, as an instance of Importance Sampling.
  \end{itemize}
\end{theorem}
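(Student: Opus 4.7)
The plan is to reduce the claim to a direct application of Theorem~\ref{thm:tail_variance} (its $K_2$-sample corollary) combined with the law of total variance. First I would decompose $\hat{\alpha}_t(i) = T_{\text{head}} + T_{\text{tail}}$, where $T_{\text{head}}$ is the sum over $\Omega_{t-1}^{K_1}$ and $T_{\text{tail}}$ is the importance-weighted sum over $\Omega_{t-1}^{K_2}$. Conditional on the filtration $\mathcal{F}_{t-1} = \sigma(\hat{\alpha}_{t-1})$ and the pre-computed index set $\Omega_{t-1}^{K_1}$ (which by Eqs.~\ref{eq:topk_index}--\ref{eq:topk_and_sample_index} is independent of the step-$t$ categorical draws), the head term is constant, so all the step-$t$ sampling variance flows through $T_{\text{tail}}$.

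Next I would apply the $K_2$-sample corollary to $T_{\text{tail}}$ with effective summands $a_j := \sqrt{\mathbb{E}_{q_{1:t-1}}[\hat{\alpha}^2_{t-1}(j)]}\,\Phi(j,i)\phi(x_t,i)$ chosen precisely so that $S_{K_1} = \sum_{j>K_1} a_j$ matches the definition of $\alpha_{t, K_1}(i)$ in the statement. Invoking Theorem~\ref{thm:tail_variance}, the conditional variance becomes
\begin{align*}
\mathbb{V}\bigl[T_{\text{tail}} \mid \mathcal{F}_{t-1}\bigr] = \frac{1}{K_2}\,\alpha_{t, K_1}^2(i)\!\left(\sum_{j=K_1+1}^N \frac{\epsilon^2_{t-1}(j,i)}{q_{t-1}(j)} + 2\epsilon_{t-1}(j,i)\right),
\end{align*}
with $\epsilon_{t-1}(j,i) = a_j/S_{K_1} - q_{t-1}(j)$ exactly matching the proposal-gap definition in the statement. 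Packaging the parenthesized factor into $\epsilon^2_t(i)$ and absorbing the linear-in-$\epsilon$ residual into the big-O yields the claimed $\frac{1}{K_2}\alpha_{t,K_1}^2(i)\cdot\epsilon^2_t(i)$ rate. Finally, the law of total variance closes the induction on $t$: the propagated term $\mathbb{V}[\mathbb{E}[\hat{\alpha}_t(i)\mid\mathcal{F}_{t-1}]]$ inherits the previous-step asymptotic rate (with base case at $t=2$, where $\hat{\alpha}_1$ is deterministic), and combining with the step-$t$ contribution just computed gives the asserted dominant term.

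The hard part will be keeping the asymptotic bookkeeping honest. The $\sqrt{\mathbb{E}[\hat{\alpha}^2_{t-1}]}$ dressing in the statement is a deliberate device for folding the noisy previous-step second moment into a deterministic summand size $a_j$, but verifying that this substitution is tight --- i.e., that the induced cross-step covariances and the linear-in-$\epsilon$ remainder are genuinely lower-order, not merely small for well-behaved proposals --- requires some care. The head-tail split is clean precisely because $\Omega_{t-1}^{K_1}$ is pre-computed outside autodiff, which makes the conditional-constant argument for $T_{\text{head}}$ rigorous and isolates the step-$t$ variance into a form to which Theorem~\ref{thm:tail_variance} applies verbatim.
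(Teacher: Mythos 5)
Your proposal is essentially the paper's own argument: both hinge on applying Theorem~\ref{thm:tail_variance} (and its $K_2$-sample corollary) to the step-$t$ update with the dressed effective summands $\sqrt{\mathbb{E}_{q_{1:t-1}}[\hat{\alpha}^2_{t-1}(j)]}\,\Phi(j,i)\phi(\cdot)$, defining $\epsilon$ as the gap between the proposal and these normalized dressed summands, discarding the linear-in-$\epsilon$ remainder, and picking up the $1/K_2$ factor. The only difference is packaging --- you make the one-step conditioning explicit via the law of total variance plus an induction on $t$, whereas the paper encodes the same content in a direct second-moment recursion $\mathbb{E}_{q_{1:t}}[\hat{\alpha}^2_{t+1}(i)]=\sum_{j>K_1}\bigl(\Phi^2(j,i)\phi^2/q_t(j)\bigr)\cdot\mathbb{E}_{q_{1:t-1}}[\hat{\alpha}^2_t(j)]$ --- and the cross-step covariance and head-term subtleties you flag are glossed over by the paper at exactly the same points, so your write-up sits at the same asymptotic level of rigor.
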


\begin{proof}
We start with a simple setting where $K_2 = 1$. At step $t + 1$ we have the following variance recursion:
\begin{align}
  \mathbb{E}_{q_{1:t}}[\hat{\alpha}^2_{t+1}(i)] = \sum_{j=K_1 + 1}^N \frac{\Phi^2(j, i)\phi^2(x_{t + 1}, i)}{q_t(j)} \cdot\mathbb{E}_{q_{1:t-1}}[\hat{\alpha}_t^2(j)]
\end{align}
This is derived by plugging estimator~\ref{eq:forward_recursion} to the variance Eq.~\ref{eq:var_1} we have just derived. Denote:
\begin{align}
  \alpha_{t + 1, K_1}(i) = \sum_{j = K_1 + 1}^N \tilde{\alpha}_t(j, i) = \sum_{j = K_1 + 1}^N \sqrt{\mathbb{E}_{q_{1:t-1}}[\hat{\alpha}^2_t(j)]} \Phi(j, i)\phi(x_{t + 1}, i)
\end{align}
Then we have 
\begin{align}
  \mathbb{V}_{q_{1:t}}[\hat{\alpha}_{t+1}(i)] = \alpha^2_{t + 1, K_1}(i) \big(\sum_{j = K_1 + 1}^N \frac{\epsilon_t^2(j, i)}{q_t(i)} + 2\epsilon_t(j, i) \big)
\end{align}
where $\epsilon_t(j, i)$ is the differences between the proposal and the normalized exact summands at step $t$ state $i$:
\begin{align}
  \epsilon_t(j, i) = \frac{\tilde{\alpha}_t(j, i)}{\sum_{j=K_1 = 1}^N \tilde{\alpha}_t(j, i)} - q_t(j)
\end{align}
Dropping out the first order errors and increasing the number of sample to $K_2$, we have the asymptotics:
\begin{align}
  \mathbb{V}[\hat{\alpha}_t(i)] = O\Big(\frac{1}{K_2}\alpha^2_{t, K_1}(i) \cdot \epsilon^2_t(i) \Big)
\end{align}
\end{proof}


\begin{theorem}[Asymptotic Variance of Sampled Forward Partition Estimation]
  \label{thm:sampled_forward_variance}
  The alpha variable estimators has the following asymptotic variance recurrsion:
  \begin{align}
    \mathbb{V}[\hat{\alpha}_{t + 1}(i)] = O(\frac{1}{K_2} \cdot \phi_{t, K_1}^2 \cdot \epsilon_{t, K_1}^2 \cdot \mathbb{V}[\hat{\alpha}_t]) \label{eq:var_alpha_recursion_2}
  \end{align}
  Compared with Eq.~\ref{eq:var_alpha_recursion_1}, this expression:
  \begin{itemize}
    \item Uses the product of the factors $\phi_{t, K_1}$ (a function of the sum-prod of the factor at step $t$) and the previous step variance $\mathbb{V}[\hat{\alpha}_t]$ to substitute the $\alpha_{t, K_1}$ in equation~\ref{eq:var_alpha_recursion_1}. Again, this term will decrease with a larger $K_1$ (Rao-Blackwellization). 
    \item $\epsilon^2_{t, K_1}(i) = \sum_{j = K_1 + 1}^N \epsilon^2_{t-1}(j, i) / q_{t-1}(j)$ and $\epsilon_{t-1}(j, i)$ is the difference between the proposal $q_{t-1}(j)$ and the oracle proposal. This term will reduce if the proposal is more correlated to the oracle, as an instance of Importance Sampling (same as Eq.~\ref{eq:var_alpha_recursion_1}).
  \end{itemize}
  Consequently, the partition function has the following asymptotic variance:
  \begin{align}
    \mathbb{V}[\hat{Z}] = O(\prod_{t=1}^T \frac{1}{K_2}\cdot \phi_{t, K_1}^2 \cdot \epsilon_{t, K_1}^2)
  \end{align}
  When implemented in the log space, the variance is trivially reduced exponentially: 
  \begin{align}
    \mathbb{V}[\log \hat{Z}] = O(\sum_{t=1}^T \log \frac{1}{K_2}+ 2\log \phi_{t, K_1} + 2\log \epsilon_{t, K_1})
  \end{align}
\end{theorem}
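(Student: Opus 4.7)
The plan is to start from the single-step variance bound already established in Theorem~\ref{thm:sampled_forward_variance} (the earlier single-step version),
\begin{align*}
\mathbb{V}[\hat{\alpha}_{t+1}(i)] = O\!\Big(\tfrac{1}{K_2}\,\alpha_{t+1,K_1}^2(i)\cdot \epsilon_{t,K_1}^2(i)\Big),
\end{align*}
and re-express $\alpha_{t+1,K_1}^2(i)$ as the product of a factor-only quantity $\phi_{t,K_1}^2$ and the previous-step variance $\mathbb{V}[\hat{\alpha}_t]$. Concretely, write $\alpha_{t+1,K_1}(i) = \sum_{j \ge K_1+1}\sqrt{\mathbb{E}[\hat{\alpha}_t^2(j)]}\,\Phi(j,i)\phi(x_{t+1},i)$, decompose the second moment as $\mathbb{E}[\hat{\alpha}_t^2(j)] = \mathbb{V}[\hat{\alpha}_t(j)] + (\alpha_t(j))^2$, and, to leading order, factor out $\mathbb{V}[\hat{\alpha}_t]$ (treating the bias term as negligible by unbiasedness of $\hat{\alpha}_t$ and by dropping first-order error contributions, consistent with the asymptotic convention already used). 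The remaining sum over tail indices of $\Phi(j,i)\phi(x_{t+1},i)$, suitably bounded, defines the factor term $\phi_{t,K_1}$. This yields the claimed single-step recursion
\begin{align*}
\mathbb{V}[\hat{\alpha}_{t+1}(i)] = O\!\Big(\tfrac{1}{K_2}\cdot \phi_{t,K_1}^2\cdot \epsilon_{t,K_1}^2\cdot \mathbb{V}[\hat{\alpha}_t]\Big).
\end{align*}

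Next I would unroll the recursion from $t=1$ (where $\mathbb{V}[\hat{\alpha}_1]=0$ by initialization, so the base effectively contributes a constant factor determined by the first sampled step) up to $t=T$. At each layer, one factor of $\tfrac{1}{K_2}\phi_{t,K_1}^2\epsilon_{t,K_1}^2$ is introduced multiplicatively, and the final summation over $i$ at step $T$ yields
\begin{align*}
\mathbb{V}[\hat{Z}] = O\!\Big(\prod_{t=1}^T \tfrac{1}{K_2}\cdot\phi_{t,K_1}^2\cdot\epsilon_{t,K_1}^2\Big).
\end{align*}
The log-space bound then follows immediately by the delta-method heuristic $\mathbb{V}[\log \hat{Z}] \approx \mathbb{V}[\hat{Z}]/Z^2$ combined with taking the logarithm of the product, which converts the product into a sum and demonstrates the exponential-to-linear reduction highlighted in the statement.

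The main obstacle will be the induction step: justifying the clean product form requires controlling cross-terms between the Rao-Blackwellized top-$K_1$ part (which has zero variance conditional on the tail sample) and the importance-sampled tail part, and also justifying that the second-moment expressions $\mathbb{E}[\hat{\alpha}_t^2(j)]$ can be replaced by $\mathbb{V}[\hat{\alpha}_t(j)]$ up to the dropped first-order terms. This is delicate because the proposal $q_t$ depends on the parameters and because $\epsilon_{t,K_1}$ is defined with respect to the (unknown) oracle normalization at step $t$. I expect handling this cleanly will require the same ``drop first-order errors'' convention already used in the single-step result, so the argument remains asymptotic rather than an exact inequality. Once that convention is adopted, the remaining calculations (unrolling the recursion, taking logs) are routine algebra.
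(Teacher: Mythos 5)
Your overall plan (single-step bound $\to$ recursion in the previous-step variance $\to$ unroll to a product $\to$ log-space remark) mirrors the paper's, but the step you rely on to create the recursion is not sound and is not what the paper does. You propose to write $\mathbb{E}[\hat{\alpha}_t^2(j)] = \mathbb{V}[\hat{\alpha}_t(j)] + \alpha_t(j)^2$ and drop $\alpha_t(j)^2$ as a negligible ``bias term.'' But $\alpha_t(j)^2$ is the squared mean, not a bias; unbiasedness makes the bias zero, it does not make the squared mean small, and in general $\alpha_t(j)^2$ dominates the second moment (at $t=1$ it is the \emph{entire} second moment, since $\mathbb{V}[\hat{\alpha}_1]=0$). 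With that term dropped, your recursion would give $\mathbb{V}[\hat{\alpha}_2]\equiv 0$, contradicting the single-step result. The paper takes a different route to inject $\mathbb{V}[\hat{\alpha}_t]$: it writes the exact second-moment recursion (first for $K_1=0$, $K_2=1$), normalizes by $\alpha_t^2(j)$ so that the ratio $\mathbb{E}[\hat{\alpha}_t^2(j)]/\alpha_t^2(j)$ appears, splits that ratio into a part bounded by a Jensen-gap constant $J_t$ (weighted by $\sum_j q_t(j)=1$, and then argued to be non-dominant) plus the $\epsilon_t$-weighted part containing $\mathbb{V}[\hat{\alpha}_t(j)]/\alpha_t^2(j)$; the $\alpha_t^2(j)$ factors then cancel against the expansion $\alpha_{t+1}^2(i)=\bigl(\sum_j \alpha_t(j)\Phi(j,i)\phi(x_t,i)\bigr)^2$, which is precisely how the factor-only quantity $\phi_{t,K_1}^2$ arises in Eq.~\ref{eq:var_alpha_recursion_2}. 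Without this normalization-and-cancellation (or the explicit $J_t$ bookkeeping), the clean form $O(\tfrac{1}{K_2}\phi_{t,K_1}^2\epsilon_{t,K_1}^2\mathbb{V}[\hat{\alpha}_t])$ does not follow from the earlier single-step statement.

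Separately, your log-space step does not produce the stated bound: the delta-method heuristic gives $\mathbb{V}[\log\hat{Z}]\approx \mathbb{V}[\hat{Z}]/Z^2$, which is a ratio, not the sum $\sum_{t}\bigl(\log\tfrac{1}{K_2}+2\log\phi_{t,K_1}+2\log\epsilon_{t,K_1}\bigr)$; ``converting the product into a sum'' is not something the delta method does. The paper's (admittedly crude) claim is obtained by taking the logarithm of the product bound itself, i.e.\ asserting the variance is reduced to the log of the original bound, which is a different statement from yours. You are right that the whole argument is asymptotic and drops first-order errors, and your observation about the degenerate base case $\mathbb{V}[\hat{\alpha}_1]=0$ is a real looseness shared with the paper, but as written your derivation of the recursion and of the log-space bound would not go through.
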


\begin{proof}
  Firstly for simplcity we assume $K_1 = 0$ and $K_2 = 1$. The the estimator variance is: 
  \begin{align}
    \mathbb{V}[\hat{\alpha}_{t + 1}(i)] &= \mathbb{E}_{q_{1:t}}[\hat{\alpha}_{t+1}^2(i)] - \alpha_{t+1}^2(i) \\ 
    &= \mathbb{E}_{q_{1:t-1}} \bigg[\sum_{j = 1}^N \frac{\hat{\alpha}_t^2(j)\Phi^2(j, i)\phi^2(x_t, i)}{q_t(j)} \cdot \frac{\alpha_t^2(j)}{\alpha_t^2(j)} \bigg] - \alpha_{t+1}^2(i)
  \end{align}
  Recall:
  \begin{align}
    \alpha_{t + 1}(i) = \sum_{j=1}^{N} \alpha_t(j)\phi(j, i)\phi(x_t, i)
  \end{align}
  Let: 
  \begin{align}
    \epsilon_t(j, i) = \underbrace{\frac{\alpha_t(j)\phi(j, i)\phi(x_t, i)}{\alpha_{t + 1}(i)}}_{=q(z_t = j| z_{t + 1} = i)} - q_t(j)
  \end{align}
  Then:
  \begin{align}
    \mathbb{V}[\hat{\alpha}_{t + 1}(i)] &= \mathbb{E}_{q_{1:t-1}} \bigg[\sum_{j=1}^N \alpha_{t+1}^2(i)\big( \frac{\epsilon_t^2(j, i)}{q_t(j)} + 2\epsilon_t(j,i) + q_t(j) \big)\frac{\hat{\alpha}_t^2(j)}{\alpha_t^2(j)} \bigg] - \alpha_{t+1}^2(i) \\ 
    &=  \alpha_{t+1}^2(i)\bigg( \sum_{j=1}^N \big( \frac{ \mathbb{E}_{q_{1:t-1}}[\hat{\alpha}_t^2(j)] }{ \alpha_t^2(j) } q_t(j) \big) + \sum_{j=1}^N \big( \frac{\epsilon_t^2(j, i)}{q_t(j)} + 2\epsilon_t(j, i) \big) \frac{\mathbb{E}_{q_{1:t-1}}[\hat{\alpha}_t^2(j)]}{\alpha_t^2(j)} \bigg)\\ 
    & \quad  - \alpha_{t+1}^2(i) 
  \end{align}
  Note that there exist $J_t$ such that:
  \begin{align}
    \frac{ \mathbb{E}_{q_{1:t-1}}[\hat{\alpha}_t^2(j)] }{ \alpha_t^2(j) } <= J_t
  \end{align}
  This is because of the bounded gap of the Jensen's inequality. Also recall:
  \begin{align}
    \sum_{j = 1}^N q_t(j) = 1
  \end{align}
  So we get:
  \begin{align}
    \mathbb{V}[\hat{\alpha}_{t + 1}(i)] &<=  \alpha_{t+1}^2(i)\bigg( J_t - 1 + \sum_{j=1}^N \big( \frac{\epsilon_t^2(j, i)}{q_t(j)} + 2\epsilon_t(j, i) \big) \cdot \big( \frac{\mathbb{V}[\hat{\alpha}_t(j)]}{\alpha_t^2(j)} - 1\big) \bigg) \\ 
    &= \alpha_{t+1}^2(i)\bigg( J_t - 1 - \sum_{j=1}^N \big( \frac{\epsilon_t^2(j, i)}{q_t(j)} + 2\epsilon_t(j, i) \big) \bigg)\\
    &\quad +  \alpha_{t+1}^2(i) \bigg( \sum_{j=1}^N \big( \frac{\epsilon_t^2(j, i)}{q_t(j)} + 2\epsilon_t(j, i) \big) \cdot \big( \frac{\mathbb{V}[\hat{\alpha}_t(j)]}{\alpha_t^2(j)}\big) \bigg)
  \end{align}
  empirically $J_t$ is not the dominate source of variance (but could be in the worst case, depending on the tightness of Jensen's inequality). We focus on the second term: 
  \begin{align}
    \mathbb{V}[\hat{\alpha}_{t + 1}(i)] &= O\bigg(  \alpha_{t+1}^2(i) \bigg( \sum_{j=1}^N \big( \frac{\epsilon_t^2(j, i)}{q_t(j)} + 2\epsilon_t(j, i) \big) \cdot \big( \frac{\mathbb{V}[\hat{\alpha}_t(j)]}{\alpha_t^2(j)}\big) \bigg) \\ 
    &= O\bigg( \Big( \sum_{j = 1}^N \alpha_t(j) \underbrace{\Phi(j, i) \phi(x_t, j)}_{O(\phi_t^2)} \Big)^2 \cdot \Big( \underbrace{\sum_{j=1}^N \big( \frac{\epsilon_t^2(j, i)}{q_t(j)} + 2\epsilon_t(j, i) \big)}_{O(\epsilon_t^2)} \cdot  \frac{1}{\alpha_t^2(j)} \cdot \underbrace{\mathbb{V}[\hat{\alpha}_t(j)]}_{O(\mathbb{V}[\alpha_t])} \Big) \bigg) \\ 
    &= O(\phi_t^2 \cdot \epsilon_t^2 \cdot \mathbb{V}[\alpha_t])
  \end{align} 
  Note that a lot of higher-order sum-products are simplified here. Adding top $K_1$ summation and increasing the sample size to $K_2$ leads to variance reduction as:
  \begin{align}
    \mathbb{V}[\hat{\alpha}_{t + 1}(i)] &= O(\frac{1}{K_2} \cdot \phi_{t, K_1}^2 \cdot \epsilon_{t, K_1}^2 \cdot \mathbb{V}[\alpha_t])
  \end{align}
  Recursively expand this equation we get:
  \begin{align}
    \mathbb{V}[\hat{Z}] = O(\prod_{t=1}^T \frac{1}{K_2}\cdot \phi_{t, K_1}^2 \cdot \epsilon_{t, K_1}^2)
  \end{align}
  Chaning the implementation to the log space we reduce the variance exponentially: 
  \begin{align}
    \mathbb{V}[\log \hat{Z}] = O(\sum_{t=1}^T \log \frac{1}{K_2}+ 2\log \phi_{t, K_1} + 2\log \epsilon_{t, K_1})
  \end{align}
\end{proof}

\end{document}